\newcommand\cA{{\cal A}}
\newcommand\cE{{\cal E}}
\newcommand\cF{{\cal F}}
\newcommand\cH{{\cal H}}
\newcommand\cS{{\cal S}}
\newcommand\cT{{\cal T}}
\newcommand\cW{{\cal W}}
\newcommand\boundedtermsize{bounded term-size}
\newcommand\Boundedtermsize{Bounded Term-size}
\newtheorem{theorem}{Theorem} 
\newtheorem{lemma}{Lemma} 
\newtheorem{definition}{Definition} 
\newtheorem{example}{Example} 
\newcommand{\algorithm}{PITA}
\begin{document}

\long\def\comment#1{}

\title[Well\--Definedness and Efficient Inference for Probabilistic Logic
Prog.]{Well\--Definedness and Efficient Inference for Probabilistic Logic
Programming under the Distribution Semantics}
\author[F. Riguzzi and T. Swift]
{FABRIZIO RIGUZZI \\
ENDIF -- University of Ferrara\\
Via Saragat 1, I-44122, Ferrara, Italy  \\
E-mail: fabrizio.riguzzi@unife.it
\and TERRANCE SWIFT \\
CENTRIA -- Universidade Nova de Lisboa \\
E-mail: tswift@cs.suysb.edu
}

\pagerange{\pageref{firstpage}--\pageref{lastpage}}
\setcounter{page}{1}
\maketitle
\label{firstpage}

\noindent
{\bf Note}: This article will appear in Theory and Practice of Logic Programming, 
\copyright Cambridge University Press, \\

\begin{abstract}
The distribution semantics is one of the most prominent approaches for the combination of logic programming and probability theory. Many languages follow this semantics, such as Independent Choice Logic, PRISM, pD, Logic Programs with Annotated Disjunctions (LPADs)  and ProbLog. 

When a program contains functions symbols, the distribution semantics
is well\--defined only if the set of explanations for a query is
finite and so is each explanation. Well\--definedness is usually
either explicitly imposed or is achieved by severely limiting the
class of allowed programs.
In this paper we identify a larger class of programs for which the
semantics is well\--defined together with an efficient procedure for
computing the probability of queries.
Since LPADs offer the most general syntax, we present our results for
them, but our results are applicable to all languages under the
distribution semantics.

We present the algorithm ``Probabilistic Inference with Tabling and
Answer subsumption'' (\algorithm) that computes the probability of
queries by transforming a probabilistic program into a normal program
and then applying SLG resolution with answer subsumption.
\algorithm\ has been implemented in XSB and tested on six domains: two
with function symbols and four without.  The execution times are
compared with those of ProbLog, \texttt{cplint} and
CVE. \algorithm\ was almost always able to solve larger problems in a
shorter time, on domains with and without function symbols.
\end{abstract}

\begin{keywords}
Probabilistic Logic Programming, Tabling, Answer Subsumption, Logic Programs with Annotated Disjunction, Program Transformation
\end{keywords}

\section{Introduction}
\label{intro}
Many real world domains can only be represented effectively if we are
able to model uncertainty.  Accordingly, there has been an increased
interest in logic languages representing probabilistic information,
stemming in part from their successful use in Machine Learning.  In
particular, languages that follow the distribution semantics
\cite{DBLP:conf/iclp/Sato95} have received much attention in the last
few years. In these languages a theory defines a probability
distribution over logic programs, which is extended to a joint
distribution over programs and queries.  The probability of a query is
then obtained by marginalizing out the programs.

Examples of languages that follow the distribution semantics are
Independent Choice Logic \cite{DBLP:journals/ai/Poole97}, PRISM \cite{DBLP:conf/ijcai/SatoK97}, pD \cite{DBLP:journals/jasis/Fuhr00},  Logic Programs with Annotated Disjunctions (LPADs)  \cite{VenVer04-ICLP04-IC} and ProbLog \cite{DBLP:conf/ijcai/RaedtKT07}. All these languages have the same expressive power as a theory in one language can be translated into another \cite{VenVer03-TR,DeR-NIPS08}.
LPADs offer the most general syntax as the constructs of all the other languages can be directly encoded in LPADs.

When programs contain functions symbols, the distribution semantics
has to be defined in a slightly different way: as proposed in
\cite{DBLP:conf/iclp/Sato95} and \cite{DBLP:journals/ai/Poole97}: the
probability of a query is defined with reference to a covering set of
explanations for the query. For the semantics to be well\--defined,
both the covering set and each explanation it contains must be finite.
To ensure that the semantics is well\--defined,
\cite{DBLP:journals/ai/Poole97} requires programs to be acyclic, while
\cite{DBLP:conf/ijcai/SatoK97} directly imposes the condition that
queries must have a finite covering set of finite explanations.

Since acyclicity is a strong requirement ruling out many interesting
programs, in this paper we propose a looser requirement to ensure the
well\--definedness of the semantics.
We introduce a definition of {\em bounded term\--size} programs and
queries, which are based on a characterization of the Well\--Founded
Semantics in terms of an iterated fixpoint \cite{Przy89d}.  A bounded
term\--size program is such that in each iteration of the fixpoint the
size of true atoms does not grow indefinitely. A bounded term\--size
query is such that the portion of the program relevant to the query is
bounded term\--size. We show that if a query is bounded term\--size,
then it has a finite set of finite explanations that are covering, so
the semantics is well\--defined.

We also present the algorithm ``Probabilistic Inference with Tabling
and Answer subsumption'' (\algorithm) that builds explanations for
every subgoal encountered during a derivation of a query.  The
explanations are compactly represented using Binary Decision Diagrams
(BDDs) that also allow an efficient computation of the probability.
Specifically, \algorithm\ transforms the input LPAD into a normal
logic program in which the subgoals have an extra argument storing a
BDD that represents the explanations for its answers.  As its name
implies, \algorithm\ uses tabling to store explanations for a
goal. Tabling has already been shown useful for probabilistic logic
programming in
\cite{DBLP:conf/cl/KameyaS00,Rig08-ICLP08-IC,KimGutSan-ILP09-IC,DBLP:conf/iclp/MantadelisJ10,RigSwi11-ICLP11-IJ}.
However, \algorithm\ is novel in its exploitation of a tabling feature
called answer subsumption to combine explanations coming from
different clauses.

\algorithm\ draws inspiration from \cite{DBLP:conf/ijcai/RaedtKT07},
which first proposed to use BDDs for computing the probability of
queries for the ProbLog language, a minimalistic probabilistic
extension of Prolog; and from \cite{Rig-AIIA07-IC} which applied BDDs
to the more general LPAD syntax. Other approaches for reasoning on
LPADs include \cite{Rig08-ICLP08-IC}, where SLG resolution is extended
by repeatedly branching on disjunctive clauses, and the CVE system
\cite{MeeStrBlo08-ILP09-IC} which transforms LPADs into an equivalent
Bayesian network and then performs inference on the network using the
variable elimination algorithm.

\algorithm\  was tested on a number of datasets, both with and without function symbols, in order to evaluate its efficiency.
The execution times of \algorithm\ were compared with those of \texttt{cplint} \cite{Rig-AIIA07-IC}, CVE \cite{MeeStrBlo08-ILP09-IC} and ProbLog \cite{ProbLog-impl}. \algorithm\ was able to solve successfully more complex queries than the other algorithms in most cases and it was also almost always faster both on datasets with and without function symbols.

The paper is organized as follows.  Section \ref{LPAD} illustrates the
syntax and semantics of LPADs over finite universes. Section
\ref{sem_fs} discusses the semantics of LPADs with function
symbols. Section \ref{mod-strat} defines dynamic stratification for
LPADs, provides conditions for the well-definedness of the LPAD
semantics with function symbols, and discusses related work on
termination of normal programs.  Section \ref{bdd} gives an
introduction to BDDs. Section \ref{tabling} briefly recalls tabling
and answer subsumption.  Section \ref{algorithm} presents \algorithm{}
and Section \ref{correctness} shows its correctness. Section \ref{related} discusses related work. Section \ref{exp}
describes the experiments and Section \ref{conc} discusses the results
and presents directions for future works.



\section{The Distribution Semantics for Function\--free Programs} \label{sec:function-free}
\label{LPAD}
In this section we illustrate the distribution semantics for
function\--free program using LPADs as the prototype of the languages
following this semantics.

A \emph{Logic Program with Annotated Disjunctions} \cite{VenVer04-ICLP04-IC} consists of a finite set of
annotated disjunctive clauses 
of the form
\[H_{1}:\alpha_{1}\vee \ldots\vee H_{n}:\alpha_{n}\leftarrow  L_{1},\ldots,L_{m}.\]
In
such a clause $H_{1},\ldots H_{n}$ are  logical atoms, $B_{1},\ldots ,B_{m}$ logical literals, and   $\alpha_{1},$ $\ldots,\alpha_{n}$ real numbers in the interval $[0,1]$
such that $\sum_{j=1}^n \alpha_{j}\leq 1$.  The term $H_{1}:\alpha_{1}\vee \ldots\vee H_{n}:\alpha_{n}$ is called the \emph{head} and $L_{1},\ldots,L_{m}$ is called the \emph{body}.  Note that if $n=1$ and $\alpha_1 = 1$ a clause corresponds to a normal program clause, also called a {\em non-disjunctive} clause.
If $\sum_{j=1}^n \alpha_{j}<1$, the head of the  clause implicitly contains an extra atom $null$ that does not appear in the body of any clause and whose annotation is $1-\sum_{j=1}^n \alpha_{j}$.
For a clause $C$, 
we  define $head(C)$ as $\{(H_i:\alpha_i)|1\leq i \leq
n\}$ if $\sum_{i=1}^n \alpha_i= 1$; and as  $\{(H_i:\alpha_i)|1\leq i \leq
n\}\cup\{(null:1-\sum_{i=1}^n \alpha_i)\}$ otherwise. Moreover, we define $body(C)$ as  $\{L_i|1\leq i \leq m\}$, $H_i(C)$ as $H_i$ and $\alpha_i(C)$ as $\alpha_i$.


If the LPAD is ground, a clause  represents a
probabilistic choice between the  non-disjunctive
clauses obtained by selecting only one atom in the head. 
%
As usual, if the LPAD $T$ is not ground, $T$ is assigned a meaning by computing its grounding, $ground(T)$.

By choosing a head  atom for each ground clause of an
LPAD we  get a normal logic program called a \emph{world} of
the LPAD (an \emph{instance} of the LPAD in \cite{VenVer04-ICLP04-IC}). 
A probability distribution is defined over the space  of worlds by assuming
independence between the choices made for each clause. 

More specifically, an  \emph{atomic choice} is a triple $(C,\theta,i)$ where $C\in T$, $\theta$ is a minimal substitution that grounds $C$ and $i\in\{1,\ldots,|head(C)|\}$. $(C,\theta,i)$  means that, for the ground clause $C\theta$, the head $H_i(C)$ was chosen.
 A set of atomic choices $\kappa$ is \emph{consistent} if $(C,\theta,i)\in\kappa,(C,\theta,j)\in \kappa\Rightarrow i=j$, i.e., only one head is selected for a ground clause. A  \emph{composite choice} $\kappa$ is a consistent set of atomic choices. The \emph{probability $P(\kappa)$ of a composite choice} $\kappa$ is the product of the probabilities of
the individual atomic choices, i.e.
$P(\kappa)=\prod_{(C,\theta,i)\in \kappa}\alpha_i(C)$.

A \emph{selection} $\sigma$ is a  composite choice that, for each  clause $C\theta$ in $ground(T)$, contains an atomic choice 
$(C,\theta,i)$ in $\sigma$. Since $T$ does not contain function symbols, $ground(T)$ is finite and so is each $\sigma$. We denote the set of all
selections  $\sigma$ of a program $T$ by ${\mathcal{S}}_T$.
%
A selection $\sigma$ identifies a normal logic program $w_\sigma$,
called a \emph{world} of $T$, defined as:
$w_\sigma=\{(H_i(C)\theta\leftarrow body(C))\theta| (C,\theta,i)\in
\sigma\}$.  ${\mathcal{W}}_T$ denotes the set of all the 
worlds of $T$.  Since selections are composite choices, we can assign
a probability to  worlds:
$P(w_\sigma)=P(\sigma)=\prod_{(C,\theta,i)\in \sigma}\alpha_i(C)$.

Throughout this paper, we consider only {\em sound} LPADs, in which
every world has a total model according to the Well-Founded Semantics
(WFS)~\cite{well-founded}. In this way, uncertainty is modeled only by means
of the disjunctions in the head and not by the semantics of negation.
Thus in the following, $w_\sigma\models A$ means that the ground atom
$A$ is true in the well-founded model of the program
$w_\sigma$\footnote{We sometimes abuse notation slightly by saying
  that an atom $A$ is true in a world $w$ to indicate that $A$ is true
  in the (unique) well-founded model of $w$.}.

%

%

In order to define the probability of an atom $A$ being true in an LPAD $T$, note that the probability distribution over possible worlds induces a probability distribution over Herbrand interpretations by  assuming $P(I|w)=1$ if $I$ is the well\--founded model  of $w$  ($I=WFM(w)$) and 0 otherwise. We can thus compute the probability of an interpretation $I$ as
$$P(I)=\sum_{w \in {\mathcal{W}}_T}P(I,w)=\sum_{w \in {\mathcal{W}}_T}P(I|w)P(w)=\sum_{w\in {\mathcal{W}}_T,I=WFM(w)}P(w).$$
We can extend the probability distribution on interpretation to ground atoms by assuming  $P(a_j|I)=1$ if $A_j$ belongs to $I$ and 0 otherwise, where $A_j$ is a ground atom of the Herbrand base $\mathcal{H}_T$ and $a_j$  stands for $A_j=true$. 
Thus the probability of  a ground atom $A_j$ being true, according to an LPAD $T$ can be obtained as
$$P(a_j)=\sum_{I}P(a_j,I)=\sum_{I}P(a_j|I)P(I)=\sum_{I \subseteq \mathcal{H}_T, A_j\in I}P(I).$$
Alternatively, we can extend the probability distribution on programs to ground atoms by assuming  $P(a_j|w)=1$ if $A_j$ is true in $w$ and 0 otherwise. Thus the probability of   $A_j$ being true is 
$$P(a_j)=\sum_{w\in {\mathcal{W}}_T}P(a_j,w)=\sum_{w\in {\mathcal{W}}_T}P(a_j|w)P(w)= \sum_{w\in {\mathcal{W}}_T,w\models A_j}P(w).$$  
The probability of $A_j$ being false is defined similarly.


\begin{example} \label{sneezing_lpad}
 Consider the dependency of sneezing on having the flu or hay fever:

{\small
$
 \begin{array}{llll}
 C_1=&strong\_sneezing(X):0.3\vee moderate\_sneezing(X):0.5&\leftarrow& flu(X).\\
 C_2=&strong\_sneezing(X):0.2\vee moderate\_sneezing(X):0.6&\leftarrow& hay\_fever(X).\\
C_3=&flu(david).\\
 C_4=&hay\_fever(david).\\
 \end{array}$}

\noindent This program models the fact that sneezing can be caused by flu or hay fever.
%
The query $moderate\_sneezing(david)$ is true in 5 of the 9 worlds of the program and its probability of being true is

%
\noindent 
$P_T(moderate\_sneezing(david))=0.5\cdot 0.2+0.5\cdot 0.6+ 0.5\cdot 0.2+0.3\cdot 0.6+ 0.2\cdot 0.6=0.8$
\end{example}
Even if we assumed independence between the choices for individual ground clauses, this does not represents a restriction, in the sense that this still allows to represent all the joint distributions of atoms of the Herbrand base that are representable with a Bayesian network over those variables. Details of the proof are omitted for lack of space.


\section{The Distribution Semantics for Programs with Function Symbols}
\label{sem_fs}
If a non-ground LPAD $T$ contains function symbols, then the semantics given in the previous section is not well-defined.  In this case,  each world $w_\sigma$ is the result of an infinite number of choices and the probability $P(w_\sigma$) is 0 since it is given by the product of an infinite number of factors all smaller than 1. Thus, the probability of a formula is 0 as well, since it is a sum of terms all equal to 0.
The distribution semantics with function symbols was defined in  \cite{DBLP:conf/iclp/Sato95} and \cite{DBLP:journals/jlp/Poole00}.  Here we follow the approach of \cite{DBLP:journals/jlp/Poole00}.

A composite choice $\kappa$ identifies a set of worlds $\omega_\kappa$ that contains all the worlds associated to a selection that is a superset of $\kappa$: i.e.,
$\omega_\kappa=\{w_\sigma|\sigma \in \mathcal{S}_T, \sigma \supseteq \kappa\}$
We define the set of worlds identified by a set of composite choices $K$ as
$\omega_K=\bigcup_{\kappa\in K}\omega_\kappa$

Given a ground atom $A$, we define the notion of explanation, covering set of composite choices and mutually incompatible set of explanations. 
A composite choice $\kappa$ is an \emph{explanation} for $A$ if $A$ is true in every world of $\omega_\kappa$. In Example \ref{sneezing_lpad}, the composite choice
$\{(C_1,\{X/david\},1)\}$
is an explanation for $strong\_sneezing(david)$.
A set of composite choices $K$ is \emph{covering} with respect to $A$ if every world $w_\sigma$ in which $A$ is true is such that $w_\sigma\in\omega_K$.
In Example \ref{sneezing_lpad}, the set of  composite choices
\begin{equation}
\label{expls}
K_1=\{\{(C_1,\{X/david\},2)\},\{(C_2,\{X/david\},2)\}\}
\end{equation}
 is covering for $moderate\_sneezing(david)$.
Two composite choices $\kappa_1$ and $\kappa_2$ are \emph{incompatible} if their union is inconsistent, i.e., if there exists a clause $C$ and a substitution $\theta$ grounding $C$ such that $(C,\theta,j)\in \kappa_1,(C,\theta,k)\in \kappa_2$ and $j\neq k$. A set $K$ of composite choices is \emph{mutually incompatible} if for all $\kappa_1\in K,\kappa_2\in K, \kappa_1\neq\kappa_2\Rightarrow \kappa_1$ and $\kappa_2$ are incompatible. As illustration, the set of composite choices
\begin{eqnarray}
K_2&=&\{\{(C_1,\{X/david\},2),(C_2,\{X/david\},1)\},\nonumber\\
&&\{(C_1,\{X/david\},2),(C_2,\{X/david\},3)\},\label{Ls}\\
&&\{(C_2,\{X/david\},2)\}\}\nonumber
\end{eqnarray}
is mutually incompatible for the theory of Example \ref{sneezing_lpad}. 
\cite{DBLP:journals/jlp/Poole00} proved the following results 
\begin{itemize}
\item Given a finite set $K$ of finite composite choices, there exists a finite set $K'$ of mutually incompatible finite composite choices such that $\omega_K=\omega_{K'}$. 
\item If $K_1$ and $K_2$ are both mutually incompatible finite sets of finite composite
choices such that $\omega_{K_1}=\omega_{K_2}$ then
$\sum_{\kappa\in K_1}P(\kappa)=\sum_{\kappa\in K_2}P(\kappa)$
\end{itemize}
Thus, we can define a unique probability measure $\mu: \Omega_T\rightarrow [0,1]$
where
$\Omega_T$ is defined as the set of sets of worlds identified by finite sets of finite composite choices: $\Omega_T=\{\omega_K|K \mbox{ is a finite set of finite composite choices}\}$.
It is easy to see that $\Omega_T$ is an algebra over $\mathcal{W}_T$.
Then $\mu$ is defined by
$\mu(\omega_K)=\sum_{\kappa\in K'}P(\kappa)$
where $K'$ is a finite mutually incompatible set of finite composite choices such that $\omega_K=\omega_{K'}$.
As is the case for ICL, $\langle\mathcal{W}_T,\Omega_T,\mu\rangle$ is a probability space 
 \cite{Kolmogorov}.

\begin{definition}\label{prob_def_LPAD}
The probability of a ground atom $A$ is given by 
$P(A)=\mu(\{w|w\in \mathcal{W}_T\wedge w\models A\}$
\end{definition}
If $A$  has a finite set $K$ of finite explanations such that $K$ is
covering  then $\{w|w\in \mathcal{W}_T\wedge w\models A\}=\omega_K$ and $\mu(\{w|w\in \mathcal{W}_T\wedge w\models A\}=\mu(\omega_K)$ so $P(A)$ is well-defined.
In the case of Example \ref{sneezing_lpad}, $K_2$ shown in equation \ref{Ls} is a finite covering set of finite explanations for $moderate\_sneezing(david)$ that is mutually incompatible, so
\[P(moderate\_sneezing(david))=0.5\cdot 0.2+0.5\cdot 0.2+ 0.6=0.8.\]



\section{Dynamic Stratification of LPADs}
\label{mod-strat}

One of the most important formulations of stratification is that of
{\em dynamic} stratification.  \cite{Przy89d} shows that a program has
a 2-valued well-founded model iff it is dynamically stratified, so
that it is the weakest notion of stratification that is consistent
with the WFS.
As presented in~\cite{Przy89d}, dynamic stratification computes strata
via operators on \emph{3-valued interpretations} -- pairs of the form
$\langle Tr;Fa \rangle$, where $Tr$ and $Fa$ are subsets of the Herbrand
base $\cH_P$ of a normal program $P$.
\begin{definition} \label{def:lrdyn-ops}
  For a normal program $P$, sets $Tr$ and $Fa$ of ground atoms, and a
  3-valued interpretation $I$ we define
  \begin{description}
  \item[$True^P_I(Tr) =$]
    $\{A|A$ is not true in $I$;  and 
                        there is a clause
                        $B \leftarrow L_1,...,L_n$
                in $P$, a ground substitution $\theta$ such that
                $A = B\theta$ and for every $1 \leq i \leq n$ either
                $L_i\theta$ is true in $I$, or $L_i\theta \in Tr$\};
  \item[$False^P_I(Fa) =$] 
$\{A|A$ is not false in $I$; and for every
    clause $B \leftarrow L_1,...,L_n$ in $P$ and ground substitution
    $\theta$ such that $A = B\theta$ there is some $i$ $(1 \leq i \leq
    n)$ such that $L_i\theta$ is false in $I$ or $L_i\theta \in Fa\}$.
\end{description}
\end{definition}
%
\cite{Przy89d} shows that $True^P_I$ and $False^P_I$ are both
monotonic, and defines $\cT^P_I$ as the least fixed point of $True^P_I(\emptyset)$
and $\cF^P_I$ as the greatest fixed point of
$False^P_I(\cH_P)$~\footnote{Below, we will sometimes omit the program $P$ in
  these operators when the context is clear.}.
%
In words, the operator $\cT_I$ extends the interpretation $I$ to add
the new atomic facts that can be derived from $P$ knowing $I$; $\cF_I$
adds the new negations of atomic facts that can be shown false in $P$
by knowing $I$ (via the uncovering of unfounded sets).  An iterated
fixed point operator builds up dynamic strata by constructing
successive partial interpretations as follows.
\begin{definition}[Iterated Fixed Point and Dynamic Strata]
\label{def:IFP}
For a normal program $P$ let 

\begin{center}
$  \begin{array}{rcl}
          WFM_0 & = & \langle \emptyset ; \emptyset \rangle;      \\
 WFM_{\alpha+1} & = &       WFM_{\alpha} \cup
                                \langle \cT_{WFM_\alpha};\cF_{WFM_\alpha} \rangle; \\
     WFM_\alpha & = & \bigcup_{\beta < \alpha} WFM_\beta, \mbox{ for limit ordinal }\alpha.
  \end{array}
$
\end{center}

\noindent
  Let $WFM(P)$ denote the fixed point interpretation $WFM_\delta$,
  where $\delta$ is the smallest (countable) ordinal such that both
  sets $\cT_{WFM_\delta}$ and $\cF_{WFM_\delta}$ are empty.
 We refer to $\delta$ as the {\em depth} of program $P$.  The {\em
   stratum} of atom $A$, is the least ordinal $\beta$ such that $A \in
 WFM_{\beta}$ (where $A$ may be either in the true or false component of
 $WFM_{\beta}$).
\end{definition}
%
\cite{Przy89d} shows that the iterated fixed point $WFM(P)$ is in fact
the well-founded model and that any undefined atoms of the
well-founded model do not belong to any stratum -- i.e. they are not
added to $WFM_{\delta}$ for any ordinal $\delta$. Thus, a program is \emph{dynamically stratified} if every atom belongs to a stratum.

Dynamic stratification captures the order in which recursive
components of a program must be evaluated.  Because of this, dynamic
stratification is useful for modeling operational aspects of program
evaluation.  Fixed-order dynamic stratification~\cite{SaSW99}, used in
Section~\ref{algorithm}, models programs whose well-founded model can
be evaluated using a fixed literal selection strategy.  In this class,
the definition of $False^P_I(Fa)$ in Definition~\ref{def:lrdyn-ops} is
replaced by\footnote{Without loss of generality, we assume throughout
  that the fixed literal selection strategy is left-to-right as in
  Prolog.}:
%
\begin{description}
  \item[$False^P_I(F) =$] 
    $\{A|A$ is not false in $I$; and
      for every clause $B \leftarrow L_1,...,L_n$ in $P$ and ground
       substitution $\theta$ such that $A = B\theta$ there is some $i$
       $(1 \leq i \leq n)$ such that $L_i\theta$ is false in
       $I$ or $L_i\theta \in Fa$, {\bf {\em and for all $j$ $(1 \leq j \leq
       i-1)$, $L_j\theta$ is true in $I\}$}}.
\end{description}
%
\cite{SaSW99} describes how fixed-order dynamic stratification
captures those programs that a tabled evaluation can evaluate with a
fixed literal selection strategy (i.e. without the SLG operations of
{\sc simplification} and {\sc delay}).

\begin{example} \label{lrd-examp} 
  The following program has a 2-valued well-founded model and so is
  dynamically stratified, but does not belong to other stratification
  classes in the literature, such as local, modular, or weak
  stratification.  {\em
\begin{tabbing}
fooooo\==fooooooooooooooooooooooooooooooo\=ooooooooooooo\=\kill
\>         s $\leftarrow$ $\neg$s.                \> 
        s $\leftarrow$ $\neg$p, $\neg$q, $\neg$r. \\
\>         p $\leftarrow$ q, $\neg$r, $\neg$s.       \>
        q $\leftarrow$ r, $\neg$p.                
\>        r $\leftarrow$ p, $\neg$q.
\end{tabbing}
}
\noindent
{\em p}, {\em q}, and {\em r}\ all belong to stratum 0, while {\em s}
belongs to stratum 1.  Note that the above program also meets the
definition of fixed-order dynamically stratified as does the simple program
{\em
\begin{tabbing}
fooooo\==fooooooooooooooooooooooooooooooo\=ooooooooooooo\=\kill
\>        p $\leftarrow$ $\neg$ p. \>
          p.
\end{tabbing}
}
\noindent
which is not locally, modularly, or weakly stratified.  Fixed-order
stratification is more general than local stratification, and than
modular stratification (since modular stratified programs can be
decidably rearranged so that they have failing prefixes).  It is
neither more nor less general than weak stratification.
\end{example}
%
As seen by the above examples, fixed-order dynamic stratification is a
fairly weak property for a program to have.  The above definitions of
(fixed-order) dynamic stratification for normal programs can be
straightforwardly adapted to LPADs -- an LPAD $T$ is
\emph{(fixed-order) dynamically stratified} if each $w \in {\cW_T}$ is
(fixed-order) dynamically stratified.

\subsection{Conditions for Well\--Definedness of the Distribution Semantics} \label{sec:bounded}
When a given LPAD $T$ contains function symbols there are two reasons
why the distribution semantics may not be well-defined for $T$.
First, a world of $T$ may not have a two-valued well-founded model;
and second, $\cH_T$ may contain an atom that does not have a finite
set of finite explanations that is covering
(cf. Section~\ref{sem_fs}).  As noted in
Section~\ref{sec:function-free}, we consider only sound LPADs in this
paper and in this section address the problem of determining whether
$\cH_T$ may contain a atom that does not have a finite set of finite
explanations that is covering.

As is usual in logic programming, we assume that a program $P$ is
defined over a language with a finite number of function and constant
symbols.  Given such an assumption, placing an upper bound on the size
of terms in a derivation implies that the number of different terms in
a derivation must be finite -- and for certain methods of derivation,
such as tabled or bottom-up evaluations, that the derivation itself is
finite.

To motivate our definitions, consider the normal program
$T_{\mathit{inf}}$: 
{\em
\begin{tabbing}
fooooo\==fooooooooooooooooooooooooooooooo\=ooooooooooooo\=\kill 
\>     p(s(X)) $\leftarrow$ p(X). \> p(0). 
\end{tabbing}
}
\noindent
This program does not have a model with a finite number of true or
undefined atoms, and accordingly, there is no upper limit on the size
of atoms produced either in a bottom-up derivation of the program
(e.g. using the fixed-point characterization of
Definition~\ref{def:IFP}), or in a top-down evaluation of the query
{\em p(Y)}.  However, the superficially similar program,  $T_{\mathit{fin}}$:
{\em \begin{tabbing}
fooooo\==fooooooooooooooooooooooooooooooo\=ooooooooooooo\=\kill
\>  p(X) $\leftarrow$ p(f(X)).  \> p(0). 
\end{tabbing}
} 
\noindent
does have a model with a finite number of true and undefined atoms.
Of course, the model for the program does not have a finite number of
false atoms, but (default) false atoms are generally not explicitly
represented in derivations.  The model can in fact be produced by
various derivation techniques, such as an alternating fixed point
computation~\cite{VG89} based on sets of true and of true or undefined
atoms; or by tabling with term depth abstraction \cite{TaSa86}.

From the perspective of the distribution semantics consider
$T'_{\mathit{fin}}$, the extension of $T_{\mathit{fin}}$ with the clause
{\em \begin{tabbing}
fooooo\==fooooooooooooooooooooooooooooooo\=ooooooooooooo\=\kill
\>  q : 0.5 $\leftarrow$ p(X).
\end{tabbing}
} 
\noindent
and $T'_{inf}$, the similar extension of $T_{\mathit{inf}}$.  Recall from
Definition~\ref{prob_def_LPAD} that the probability of an atom $A$ in
an LPAD is defined as a probability measure that is constructed from
finite sets of finite composite choices: accordingly, the distribution
semantics for $A$ is well-defined if and only if it has a finite set
of finite explanations that is covering.  In $T'_{\mathit{fin}}$, {\em q} has
such a finite set of finite explanations that is covering, and so its
distribution semantics is well-defined.  However, in $T'_{\mathit{inf}}$, {\em
  q} does not have a finite set of finite explanations that is
covering, and so the distribution semantics is not well-defined for
{\em q}, even though every world of $T'_{\mathit{inf}}$  has a total
well-founded model.

The following definition captures these intuitions, basing the notion
of \boundedtermsize{} on the preceding definition of dynamic
stratification.

\begin{definition}[\Boundedtermsize{} Programs] \label{def:bts}
Let $P$ be the ground instantiation of a normal program. and $I,Tr
\subseteq \mathcal{H}_P$. Then an application of $True^P_I(Tr)$
(Definition~\ref{def:lrdyn-ops}) has the \emph{\boundedtermsize{}}
property if there is a integer $L$ such that the size of every ground
substitution $\theta$ used to produce an atom in $True^P_I(Tr)$ is
less than $L$.  $P$ itself has the \emph{\boundedtermsize{}} property
if every application of $True^P_I$ used to construct $WFM(P)$ has the
\boundedtermsize{} property with the same bound $L$.  Finally, an LPAD $T$ has the
\emph{\boundedtermsize{}} property if each world of $T$ has
the \boundedtermsize{} property.
\end{definition}

Note that $T_{inf}$ does not have the \boundedtermsize{} property, but
$T_{fin}$ does.  While determining whether a program $P$ is
\boundedtermsize{} is clearly undecidable in general, $T_{fin}$ shows
that $ground(P)$ need not be finite if $P$ is \boundedtermsize{}.
However, the model of $P$ may be characterized as follows\footnote{The
  proof of this and other theorems is given in the online Appendix to this
  paper.}.

\begin{theorem} \label{thm:finite}
Let $P$ be a normal program.  Then $WFM(P)$ has a finite number of
true atoms iff $P$ has the \boundedtermsize{} property.
\end{theorem}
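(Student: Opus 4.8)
The plan is to exploit the fact that, under the standing assumption of a finite signature, a set of ground atoms is finite if and only if there is a uniform bound on the term-size of its members: with finitely many function and constant symbols there are only finitely many ground atoms whose term-size is below any fixed integer. Since the true atoms of $WFM(P)$ are exactly those atoms ever placed in a true component $\cT_{WFM_\alpha}$ of the iterated fixed point, and every such atom is produced as $B\theta$ by some application of $True^P_I$ occurring in the construction of $WFM(P)$, the theorem reduces to translating between ``a bound on the producing substitutions'' and ``a bound on the produced atoms.''

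For the direction \boundedtermsize{} $\Rightarrow$ finiteness I would argue directly. If $P$ is \boundedtermsize{} there is a single integer $L$ bounding the size of every substitution used in every application of $True^P_I$ in the construction of $WFM(P)$. Because $P$ has finitely many clauses, the heads $B$ range over a finite set of patterns, so the size of each produced atom $B\theta$ is bounded by a constant depending only on $L$ and on this finite set. Hence every true atom has term-size below a single fixed bound and, by the signature remark, the collection of true atoms is finite. The use of a \emph{single} bound $L$ valid across all (possibly transfinitely many) strata is exactly what makes this argument uniform, so no separate treatment of limit ordinals is needed.

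For the converse I would prove the contrapositive: if $P$ is not \boundedtermsize{}, then $WFM(P)$ has infinitely many true atoms. Failure of the property means that for every $L$ some application of $True^P_I$ in the construction uses a producing substitution $\theta$ of size at least $L$, and the atoms so produced enter a $\cT_{WFM_\alpha}$ and are therefore true in $WFM(P)$. The key step is to convert unbounded substitution size into unbounded \emph{true-atom} size, whence infinitely many distinct true atoms. This is immediate when a large binding lands on a variable of the head $B$ (the true atom $B\theta$ is then large) or on a variable occurring in a positive body literal $L_i$ (the instance $L_i\theta$ must, by the definition of $True^P_I$, already be a true atom, and it is large).

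The main obstacle is the remaining case, where the large part of $\theta$ binds only variables occurring exclusively in \emph{negative} body literals: such a binding can inflate $\theta$ without enlarging the produced head atom or any true atom it depends on, since the negated atoms it references are false, not true, and so are not counted. I would resolve this by reading the size of a ``substitution used to produce an atom'' as the size restricted to the variables that actually propagate into the true component — the head and positive-body variables — equivalently by the standard allowedness assumption on $P$; under this reading unbounded substitution size genuinely forces unbounded true-atom size, closing the argument. This case is where essentially all the content of the theorem lies, the remaining steps being bookkeeping on top of the finite-signature observation.
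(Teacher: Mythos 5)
Your proposal is correct in substance and, for the hard direction, takes a genuinely different route from the paper's. The easy direction (\boundedtermsize{} implies finitely many true atoms) is argued exactly as in the paper's Lemma~\ref{lem:finite-normal}: a single bound $L$ on producing substitutions, the finitely many clause heads, and the finite signature together bound the size, and hence the number, of true atoms. For the converse, the paper argues directly: if $WFM(P)$ has finitely many true atoms, then all productive applications of $True^P_I$ can be carried out using only a finite set of ground clauses, whose literals then have bounded term size. You instead prove the contrapositive, with a case analysis on where an arbitrarily large binding can land; this forces you to confront the case of variables occurring only in negative body literals, which the paper's proof never mentions. That case is not a defect of your argument but a genuine ambiguity in Definition~\ref{def:bts}: read extensionally (``every substitution that produces an atom''), the theorem is actually false --- for the program $q \leftarrow \neg p(X)$, $p(a)$ over a language containing a unary function symbol $f$, the atom $q$ is produced by $\{X/f^{n}(a)\}$ for every $n$, so no bound $L$ exists, yet $WFM(P)$ has exactly two true atoms. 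The paper's phrase ``can be constructed using only a finite set of ground clauses'' silently adopts a witness reading (some bounded substitution per produced atom suffices), under which negative-only variables are harmless: when only finitely many atoms are true, a small false instance can always be chosen for them. Your fix --- measuring $\theta$ only on the variables of the head and positive body literals, or equivalently assuming allowedness --- is a different but equally legitimate disambiguation, and under it your case analysis does close the proof: a large head or positive-body binding yields a large true atom, and true atoms of unbounded size are infinitely many. In short, the paper's route is shorter and hides the delicate case inside an existential phrasing; yours is longer but makes explicit the reading of Definition~\ref{def:bts} on which the truth of the theorem actually depends.
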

%
Theorem~\ref{thm:finite} gives a clear model-theoretic
characterization of \boundedtermsize{} normal programs: note that if
$ground(P)$ is infinite, then $WFM(P)$ may have an infinite number of
false or undefined atoms.  In the context of LPADs, the
\boundedtermsize{} property ensures the well\--definedness of the
distribution semantics.

\begin{theorem} \label{thm:finite-lpad}
Let $T$ be a sound \boundedtermsize{} LPAD, and let $A \in
\mathcal{H}_T$.  Then $A$ has a finite set of finite explanations that
is covering.
\end{theorem}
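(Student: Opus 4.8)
The plan is to collapse the (possibly infinite) family of worlds down to a single \emph{finite} sub-program that alone determines the truth of $A$, and then to read the covering set of explanations off the finitely many ways of resolving the choices in that sub-program. The first step is to invoke Theorem~\ref{thm:finite}: since $T$ is \boundedtermsize{}, every world $w \in \cW_T$ is \boundedtermsize{}, so its (two-valued, by soundness) well-founded model $WFM(w)$ contains only finitely many true atoms. The pivotal move is to upgrade this per-world finiteness to a \emph{single} bound valid for all worlds at once: because every world is an instance of the same finite LPAD $T$ and the \boundedtermsize{} property provides a common bound $L$ on the size of the ground substitutions producing true atoms, I would argue that there is an integer $N$, depending only on $T$, such that every atom true in $WFM(w)$ for \emph{any} world $w$ has size at most $N$. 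As the language has finitely many function and constant symbols, only finitely many ground atoms have size $\le N$; call this finite set $\mathcal{P}$ the \emph{possibly-true} atoms.

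Next I would isolate a finite \emph{relevant sub-program}. Let $\mathcal{C}_A$ be the set of ground clause instances $C\theta$ that can participate in deriving or refuting a $\mathcal{P}$-atom, i.e.\ those whose head lies in $\mathcal{P}$ and whose relevant substitution is bounded in terms of $N$; this set is finite because $\mathcal{P}$ is finite and each clause of $T$ has a fixed body, so the substitutions that can contribute to a true atom are bounded. The key claim is then that, for every world $w$, the restriction of $WFM(w)$ to $\mathcal{P}$ equals the well-founded model determined by $w$'s choices on $\mathcal{C}_A$ alone. The justification must handle negation carefully: every atom outside $\mathcal{P}$ is false in every world, so a positive body occurrence of such an atom always fails while a negative occurrence is always satisfied, whence clauses outside $\mathcal{C}_A$ never contribute to the truth of a $\mathcal{P}$-atom; conversely, the falsity of a $\mathcal{P}$-atom is witnessed exactly by the failure of the finitely many relevant instances in $\mathcal{C}_A$ (the instances outside it fail automatically and cannot threaten that falsity). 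It follows that the truth value of $A$ in $WFM(w)$ is a function of the atomic choices $w$ makes on the finitely many instances in $\mathcal{C}_A$ and nothing else.

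Finally I would read off the covering set. There are only finitely many consistent ways $\kappa_1,\dots,\kappa_r$ to select one head for each ground instance in $\mathcal{C}_A$; each $\kappa_j$ is a finite composite choice, and the sets $\omega_{\kappa_1},\dots,\omega_{\kappa_r}$ partition $\cW_T$. By the previous step $A$ has a constant truth value throughout each $\omega_{\kappa_j}$, so setting $K=\{\kappa_j : A \mbox{ is true in every } w\in\omega_{\kappa_j}\}$ produces a finite set of finite composite choices; each element of $K$ is by construction an explanation for $A$, and $K$ is covering since any world in which $A$ is true restricts to some $\kappa_j\in K$ and hence lies in $\omega_K$. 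I expect the main obstacle to be exactly the uniformity step of paragraph one — obtaining the single bound $N$ over the infinite collection of worlds from the \boundedtermsize{} hypothesis, since per-world finiteness by itself need not yield a uniform bound — together with the careful treatment of negation in the relevant-sub-program claim, where one must verify that keeping a $\mathcal{P}$-atom false across all extensions of a given $\kappa_j$ constrains only the finitely many instances inside $\mathcal{C}_A$ and never any clause outside it.
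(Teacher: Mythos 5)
Your overall architecture --- per-world finiteness via Theorem~\ref{thm:finite}, then a uniform size bound over all worlds, then finitely many relevant ground clauses and finitely many composite choices --- is the same architecture as the paper's proof. But the proposal has a genuine gap at precisely the step you flag as ``the main obstacle'': the uniform bound $N$. Your stated justification, namely that the \boundedtermsize{} property ``provides a common bound $L$ on the size of the ground substitutions producing true atoms,'' misreads Definition~\ref{def:bts}: that definition only requires that \emph{each} world of $T$ be \boundedtermsize{}, i.e., each world $w$ comes with its own bound $L_w$; nothing in the definition makes these bounds uniform across the (possibly infinitely many) worlds. Since you yourself then concede that per-world finiteness need not yield a uniform bound, the proposal ends up asserting, rather than proving, the one claim on which everything else rests.

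The paper closes exactly this gap with a contradiction argument, and that argument is the real substance of its proof. Suppose the per-world bounds $L_w$ could be arbitrarily large. Restricting first to negation-free programs (negation is later handled by induction over strata), worlds with arbitrarily large true atoms must exploit disjunctive clauses for which some head choices (the generators $D_{expand}$) keep enlarging terms along a chain of interpretations while other choices ($D_{stop}$) cut the growth off; with no disjunction at all, a term-growing recursion would already force an infinite model inside a single world. But then one can assemble a single world $w_{inf}$ that always takes the $D_{expand}$ choices and never takes $D_{stop}$: its well-founded model is infinite, so $w_{inf}$ is not \boundedtermsize{} by Theorem~\ref{thm:finite}, contradicting the hypothesis on $T$. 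This is exactly the phenomenon in the paper's example $q{:}0.5 \vee p(f(X)){:}0.5 \leftarrow p(X)$ with $p(0)$: the finite models grow without bound precisely because an infinite-model world exists. Your third paragraph (the finite relevant sub-program $\mathcal{C}_A$ and the partition into composite choices $\kappa_j$) is fine once $N$ exists --- and is, if anything, more elaborate than needed, since the paper simply takes, for each of the finitely many finite models in which $A$ is true, the finite set of atomic choices attached to the ground clauses deriving its true atoms --- but without the construction of $w_{inf}$ the theorem has not been proved.
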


\noindent
The proof of Theorem~\ref{thm:finite-lpad} is presented in the online
Appendix; here we indicate the intuition behind the proof.  First, we
note that it is straightforward to show that since each world of an
LPAD $T$ has a finite number of true atoms by
Theorem~\ref{thm:finite}, explanations are finite.  On the other hand,
showing that a query has a finite covering set of explanations is less
obvious, as $T$ could have an infinite number of worlds.  The proof
addresses this by showing that $T$ has a finite number of models, in
turn shown by demonstrating the existence of a bound $L_T$ on the
maximal size of any true atom in any world of $T$.  The existence of
$L_T$ is shown by contradiction by demonstrating that if no bound
existed, a world could be constructed that was not \boundedtermsize.
The idea is explained in the following example.

\comment{
This in turn follows from showing that if $T$ has \boundedtermsize{}
there is an integer, $L_T$, representing the maximal size of any atom
in the model of any world of $T$.  Since the language of $T$ if
finite, there are only a finite number of possible atoms whose size is
less than or equal to $L_T$, so that a finite set of covering
explanations must be constructible.  The existence of $L_T$ is shown
by contradiction, and the intuition can be seen from the following
program.}
\begin{example}
Consider the program
\begin{tabbing}
fooooo\==fooooooooooooooooooooooooooooooo\=ooooooooooooo\=\kill 
\>     $q:0.5 \vee p(f(X)):0.5\leftarrow p(X).$ \> $p(0).$
\end{tabbing}
\noindent
This program has an infinite number of finite models, which consist of
true atoms
\begin{tabbing}
fooooo\==fooooooooooooooooooooooooooooooo\=ooooooooooooo\=\kill 
\> $\{q,p(0)\},\{q,p(0),p(f(0))\},\{q,p(0),p(f(0)),p(f(f(0)))\},\ldots$
\end{tabbing}
depending on the selections made for instantiations of the first
clause, and so no finite bound $L_T$ exists for this program.  However
such a program also has a selection that gives rise to an infinite
model
\begin{tabbing}
fooooo\==fooooooooooooooooooooooooooooooo\=ooooooooooooo\=\kill 
\> $
\{p(0),p(f(0)),p(f(f(0))),p(f(f(f(0)))),\ldots\}$
\end{tabbing}
and so is not \boundedtermsize.  
\end{example}
Although \boundedtermsize{} programs have appealing properties, such
programs can make only weak use of function symbols.  For instance, a
program containing the Prolog predicate {\em member/2} would not be
\boundedtermsize{}, although as any Prolog programmer knows, a query
to {\em member/2} will terminate whenever the second argument of the
query is ground.  We capture this intuition with {\em
  \boundedtermsize{} queries}.  The definition of such queries relies
on the notion of an atom dependency graph, whose definition we state
for LPADs.

\begin{definition}[Atom Dependency Graph] \label{def:adg}
Let $T$ be a ground LPAD.  Then the {\em atom dependency graph} of $T$
is a graph $(V,E)$ such that $V = \mathcal{H}_T$ and an edge $(v_1,v_2) \in
E$ iff there is a clause $C \in T$ such that 
\begin{enumerate}
\item $(v_1:\alpha_1) \in head(C)$ and if $v_2 \mbox{ or } \neg v_2 \in body(C)$; or
\item $(v_1:\alpha_1), (v_2:\alpha_2) \in head(C)$.
\end{enumerate}
\end{definition}

\noindent
Definition~\ref{def:adg} includes dependencies among atoms in the head
of a disjunctive LPAD clause, similar to how dependencies are defined
in disjunctive logic programs.  Given a ground LPAD $T$, the atom
dependency graph of $T$ is used to bound the search space of a
(relevant) derivation in a  world of $T$ under the WFS.

\begin{definition}[\Boundedtermsize{} Queries] \label{def:bts-queries}
Let $T$ be a ground LPAD, and $Q$ an atomic query to $T$ (not
necessarily ground).  Then the {\em atomic search space} of $Q$
consists of the union of all ground instantiations of $Q$ in
$\mathcal{H}_T$ together with all atoms reachable in the atom
dependency graph of $T$ from any ground instantiation of $Q$.  Let
\[
T_Q = \{C | C \in T \mbox{ and a head atom of $C$ is in the atomic search space of } Q \}
\]
\noindent
The query $Q$ is \emph{\boundedtermsize{}} if $T_Q$ is a \boundedtermsize{}
program.
\end{definition}

\noindent
The notion of a bounded-term size query will be used in
Section~\ref{tabling} to characterize termination of the SLG tabling
approach, and in Section~\ref{correctness} to characterize correctness
and termination of our tabled PITA implementation.


%


\subsection{Comparisons of Termination Properties} \label{sec:compare}

We next consider how the concepts of \boundedtermsize{} programs and
queries relate to some other classes of programs for which termination
has been studied.  Since the definitions of the previous section are
based on LPADs, and other work in the literature is often based on
disjunctive logic programs, we restrict our attention to normal
programs, for which the semantics coincide. 

\cite{BaBC09} studies the class of {\em finitely recursive} programs,
which is a superset of {\em finitary} programs previously introduced
into the literature by the authors.  The paper first defines a
dependency graph, which for normal programs is essentially the same as
Definition~\ref{def:adg}.  A finitely recursive normal program, then,
is one for which in its atom dependency graph, only a finite number of
vertices are reachable from any vertex.  It is easy to see that
neither \boundedtermsize{} programs nor finitely recursive programs
are a subclass of each other.  A program containing simply {\em
  member/2} (and a constant) is finitely recursive, but is not
\boundedtermsize{}.  However, the program
{\em \begin{tabbing}
fooooo\==fooooooooooooooooooooooooooooooo\=ooooooooooooo\=\kill
\>  p(X) $\leftarrow$ p(f(X)). 
\end{tabbing}
} 
\noindent
has \boundedtermsize{}, as does the program
{\em 
\begin{tabbing}
fooooo\==fooooooooooooooooooooooooooooooo\=ooooooooooooo\=\kill \>
    p(s(X)) $\leftarrow$ q(X),p(X). \> p(0).
\end{tabbing}
}
\noindent
although neither is finitely recursive (for the last program, the
failure of {\em q(X)} means that all applications of $True_I$ have
\boundedtermsize{}).  However, note that for any program $P$ that is
finitely recursive, all ground atomic queries to $P$ will have
\boundedtermsize{}.  Therefore, if $P$ is finitely recursive, every
ground atomic query to $P$ will be \boundedtermsize{}, even if $P$
itself isn't \boundedtermsize{}.

Another recent work \cite{CCIL08} defines the class {\em
  finitely-ground} programs.  We do not present its formalism here,
but Corollary 1 of \cite{CCIL08} states that if a program is
finitely-ground, it will have a finite number of answer sets and each
answer set will be finite (as represented by the set of true atoms in
the model).  By Theorem~\ref{thm:finite} of this paper, such a program
will have \boundedtermsize{}, so that finitely-ground programs may be
co-extensive with \boundedtermsize{} programs.  On the other hand,
\cite{CCIL08} notes that finitely-ground programs and finitely
recursive programs are incompatible.  Non-range restricted programs
are not finitely-ground, although they can be finitely recursive. As
discussed above, any ground atomic query to a finitely recursive
program will have \boundedtermsize{}, so that finitely-ground programs
must be a proper subclass of those programs for which all ground
atomic queries have \boundedtermsize{}.

To summarize for normal programs:
\begin{itemize}
\item  Finitely recursive and \boundedtermsize{} programs are incompatible,
but finitely recursive programs are a proper subclass of those
programs for which all ground atomic queries are \boundedtermsize{}.

\item Finitely-ground and \boundedtermsize{} programs appear to be
  co-extensive, but finitely-ground programs are a proper subclass of those
programs for which all ground atomic queries are \boundedtermsize{}.
\end{itemize}

\section{Representing Explanations by Means of Decision Diagrams}
\label{bdd}
In order to represent explanations we can use Multivalued Decision Diagrams (MDDs) \cite{ThaDav-MDD-78}.
An MDD represents a  function $f(\mathbf{X})$ taking Boolean values on a set of multivalued variables ${\mathbf X}$ by means of  
a rooted graph that has one level for each variable. Each node $N$ has one child for each possible value of the multivalued variable associated to $N$. The leaves store either 0 or 1.
Given values for all the variables ${\mathbf X}$, an MDD can be used to compute the value of $f(\mathbf{X})$  by traversing the graph starting from the root and returning the value associated to the leaf that is reached.

Given a set of explanations $K$, we obtain a Boolean function $f_K$ in
the following way. Each ground clause $C\theta$ appearing in $K$ is
associated to a multivalued variable $X_{C\theta}$ with as many values
as atoms in the head of $C$. In other words, each atomic choice
$(C,\theta,i)$ is represented by the propositional equation
$X_{C\theta}=i$.\ Equations for a single explanation are conjoined and
the conjunctions for the different explanations are disjoined.
The set of explanations in Equation (\ref{expls})
can be represented by the function
$f_{K_1}(\mathbf{X})=(X_{C_1\{X/david\}}=2)\vee (X_{C_2\{X/david\}}=2)$. 
The MDD shown in Figure \ref{MDD} represents $f_{K_1}(\mathbf{X})$.

\begin{figure}
\centering
\subfigure
	[MDD.\label{MDD}]	{$$
\xymatrix@C=0.01mm@R=1.2mm{ & *+<3pt>[F-:<3pt>]{\textcolor{white}{ciao}}
\ar@/^/@{-}[r]^1\ar@/_/@{-}[r]^3 \ar@/_/@{-}[ddr]^2
& *+<3pt>[F]{\textcolor{white}{a}0\textcolor{white}{a}}\\
 *+<3pt>[F-:<3pt>]{\textcolor{white}{ciao}} 
\ar@/_/@{-}[drr]^2\ar@/^/@{-}[ur]^1 \ar@/_/@{-}[ur]^3\\
&& *+<3pt>[F]{\textcolor{white}{a}1\textcolor{white}{a}}\\
X_{C_1\{X/david\}}&X_{C_2\{X/david\}}
}
$$
}
\hspace{0.2cm}
\subfigure
	[BDD.\label{BDD}]	{$$
\xymatrix@C=0.01mm@R=1.4mm{ & & *+<3pt>[F-:<3pt>]{\textcolor{white}{ciao}}
\ar@/_/@{-}[dr]^0\ar@/^1pc/@{-}[rr]^1 
& &*+<3pt>[F]{\textcolor{white}{a}0\textcolor{white}{a}}\\
*+<3pt>[F-:<3pt>]{\textcolor{white}{ciao}} 
\ar@/^0.5pc/@{-}[urr]^1\ar@/_/@{-}[dr]^0 
&&& *+<3pt>[F-:<3pt>]{\textcolor{white}{ciao}} 
\ar@/_/@{-}[dr]^1\ar@/^/@{-}[ur]^0 \\
& *+<3pt>[F-:<3pt>]{\textcolor{white}{ciao}} 
\ar@/_0.5pc/@{-}[rrr]^1\ar@/^/@{-}[uur]^0
&&&*+<3pt>[F]{\textcolor{white}{a}1\textcolor{white}{a}}
 \\
X_{C_1\{X/david\}1}&X_{C_1\{X/david\}2}&X_{C_2\{X/david\}1}&X_{C_2\{X/david\}2}
}
$$
}
\caption{Decision diagrams for Example  \ref{sneezing_lpad}.}
\label{dd}
\end{figure}
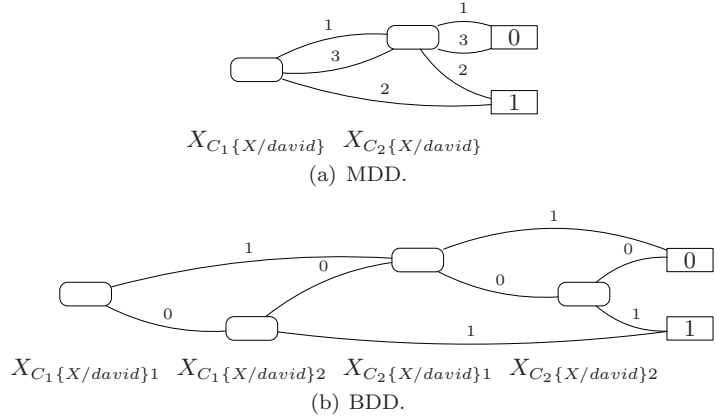

Given a MDD $M$, we can identify a set of explanations $K_M$ associated to $M$ that is obtained by considering each path from the root to a 1 leaf as an explanation. It is easy to see that if $K$ is a set of explanations and $M$ is obtained from $f_K$, $K$ and $K_M$ represent the same set of worlds, i.e., that $\omega_K=\omega_{K_M}$.

Note that $K_M$ is mutually incompatible because at each level we
branch on a variable so that the explanations associated to the leaves
that are descendants of a child of a node $N$ are incompatible with
those of any other children of $N$.

By converting a set of explanations into a mutually incompatible set
of explanations, MDDs allow the computation of $\mu(\omega_K)$
(Section~\ref{sem_fs}) given any $K$. This is equivalent to computing
the probability of a DNF formula which is \#P-complete
\cite{DBLP:journals/siamcomp/Valiant79}. Decision diagrams offer a
practical solution for this problem and were shown better than other methods
\cite{DBLP:conf/ijcai/RaedtKT07}.

Decision diagrams can be built with various software packages 
that provide highly efficient implementation of Boolean operations. However, most packages are restricted to work with Binary Decision Diagrams, i.e., decision diagrams where all the variables are Boolean.
 To manipulate MDDs with a BDD package, we must represent multivalued variables by means of binary variables. Various options are possible, we found that the following, proposed in \cite{DeR-NIPS08},
gives the best performance. For a variable $X$ having $n$ values, we use $n-1$ Boolean variables $X_{1},\ldots,X_{n-1}$ and we represent the equation $X=i$ for $i=1,\ldots n-1$ by means of the conjunction $$\overline{X_{1}}\wedge\overline{X_{2}}\wedge\ldots \wedge \overline{X_{i-1}}\wedge X_{i}$$
 and the equation $X=n$ by means of the conjunction $$\overline{X_{1}}\wedge\overline{X_{2}}\wedge\ldots \wedge\overline{X_{n-1}}$$
The BDD representation of the function $f_{K_1}$ is given in Figure \ref{BDD}.
The Boolean variables are associated with the following parameters: 
$$\begin{array}{ll}
P(X_{1})=P(X~=~1)\\
\ldots\\
 P(X_{i})=\frac{P(X=i)}{\prod_{j=1}^{i-1}(1-P(X_{j-1}))}
\end{array}$$.

%
%
%
%
%
%
%
%

\section{Tabling and Answer Subsumption}
\label{tabling}
The idea behind tabling is to maintain in a table both subgoals
encountered in a query evaluation and answers to these subgoals.  If a
subgoal is encountered more than once, the evaluation reuses
information from the table rather than re-performing resolution
against program clauses.  Although the idea is simple, it has
important consequences.  First, tabling ensures termination for a wide
class of programs, and it is often easier to reason about termination
in programs using tabling than in basic Prolog.  Second, tabling can
be used to evaluate programs with negation according to the WFS.
Third, for queries to wide classes of programs, such as datalog
programs with negation, tabling can achieve the optimal complexity for
query evaluation.  And finally, tabling integrates closely with
Prolog, so that Prolog's familiar programming environment can be used,
and no other language is required to build complete systems.  As a
result, a number of Prologs now support tabling including XSB, YAP,
B-Prolog, ALS, and Ciao.  In these systems, a predicate $p/n$ is
evaluated using SLDNF by default: the predicate is made to use tabling
by a declaration such as {\em table p/n} that is added by the user or
compiler.

This paper makes use of a tabling feature called {\em answer
  subsumption}.  Most formulations of tabling add an answer $A$ to a
table for a subgoal $S$ only if $A$ is a not a variant (as a term) of
any other answer for $S$.  However, in many applications it may be
useful to order answers according to a partial order or (upper
semi-)lattice.  As an example, consider the case of a lattice on the
values of the second argument of a binary predicate $p/2$. Answer
subsumption may be specified by a declaration such as {\em table
  p(\_,or/3 - zero/1).}
%
%
where $zero/1$ is the bottom element of the lattice and $or/3$ is the
join operation of the lattice. For example, if a table had an answer
$p(a,b_1)$ and a new answer $p(a,b_2)$ were derived, the answer
$p(a,b_1)$ is replaced by $p(a,b_3)$, where $b_3$ is the join of $b_1$
and $b_2$ obtained by calling $or(b_1,b_2,b_3)$. In the PITA
algorithm for LPADs presented in Section \ref{algorithm} the last
argument of an atom is used to store explanations for the atom in the
form of BDDs and the $or/3$ operation is the logical disjunction of
two explanations~\footnote{The logical disjunction $b_3$ can be seen as
  subsuming $b_1$ and $b_2$ over the partial order af implication
  defined on propositional formulas that represent explanations.}.
Answer subsumption over arbitrary upper semi-lattices is implemented
in XSB for stratified programs~\cite{Swif99a}.

For formal results in this section and Section~\ref{correctness} we
use SLG resolution~~\cite{DBLP:journals/jacm/ChenW96}, under the
forest-of-trees representation~\cite{Swif99b}; this framework is
extended with answer subsumption in the proof of
Theorem~\ref{eval-s-n-c}.  However, first we present a theorem stating
that \boundedtermsize{} queries (Definition~\ref{def:bts-queries}) to
normal programs are amenable to top-down evaluation using tabling.
Although SLG has been shown to finitely terminate for other notions of
\boundedtermsize{} queries, the concept as presented in
Definition~\ref{def:bts-queries} is based on a bottom-up fixed-point
definition of WFS, and only bounds the size of substitutions used in
$True^P_I$ of Definition~\ref{def:lrdyn-ops}, but not of $False^P_I$.
In fact, to prove termination of SLG with respect to
\boundedtermsize{} queries, SLG must be extended so that its {\sc New
  Subgoal} operation performs what is called {\em term-depth
  abstraction}~\cite{TaSa86}, explained informally as follows.  An SLG
evaluation can be formalized as a forest of trees in which each tree
corresponds to a unique (up to variance) subgoal.  The SLG {\sc New
  Subgoal} operation checks to see if a given selected subgoal $S$ is
the root of any tree in the current forest.  If not, then a new tree
with root $S$ is added to the forest.  Without term-depth abstraction,
an SLG evaluation of the query {\em p(a)} and the program consisting
of the single clause
{\em \begin{tabbing}
fooooo\==fooooooooooooooooooooooooooooooo\=ooooooooooooo\=\kill
\>  p(X) $\leftarrow$ p(f(X)). 
\end{tabbing}
} 
\noindent
would create an infinite number of trees.  However, if the {\sc New
  Subgoal} operation uses term-depth abstraction, any subterm in $S$
over a pre-specified maximal depth would be replaced by a new
variable.  For example, in the above program if the maximal depth were
specified as 3, the subgoal {\em p(f(f(f(a))))} would be rewritten to
{\em p(f(f(f(X))))} for the purposes of creating a new tree.  The
subgoal {\em p(f(f(f(a))))} would consume any answer from the tree for
{\em p(f(f(f(X))))} where the binding for $X$ unified with $a$.  In
this manner it can be ensured that only a finite number of trees were
created in the forest.  This fact, together with the size bound on the
derivation of answers provided by Definition~\ref{def:bts-queries}
ensures the following theorem, where a finitely terminating evaluation
may terminate normally or may terminate through floundering.

\begin{theorem} \label{thm:normal-tab-term}
Let $P$ be fixed-order dynamically stratified normal program, and $Q$
a \boundedtermsize{} query to $P$.  Then there is an SLG evaluation of
$Q$ to $P$ using term-depth abstraction that finitely terminates.
\end{theorem}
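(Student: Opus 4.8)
The plan is to establish the two ingredients that the theorem's preceding discussion identifies as jointly sufficient for finite termination: first, that term-depth abstraction bounds the number of distinct subgoal trees in the SLG forest; and second, that the \boundedtermsize{} property of $T_Q$ bounds the size of every answer derived, so that each individual tree is finite. Combining these gives a forest that is finite in both breadth (number of trees) and depth (size of each tree), from which finite termination follows by a standard argument about SLG forests.

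First I would set up the term-depth abstraction parameter. Since $Q$ is \boundedtermsize{}, by Definition~\ref{def:bts-queries} the relevant program $T_Q$ is a \boundedtermsize{} program, so by Definition~\ref{def:bts} there is a bound $L$ on the size of every ground substitution $\theta$ used in any application of $True^{T_Q}_I$ during the construction of $WFM(T_Q)$. I would choose the maximal term-depth of the abstraction to be the depth corresponding to this $L$ (together with the fixed depth of $Q$ itself and the finitely many function symbols of the language). Because the language has only finitely many function and constant symbols, there are only finitely many atoms whose term-depth is below this bound; hence the {\sc New Subgoal} operation, which abstracts any selected subgoal exceeding the bound before creating a tree, can create only finitely many trees. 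This handles breadth of the forest.

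Next I would argue that each tree is finite. Here the key is to connect the top-down SLG derivation to the bottom-up fixed-point of Definition~\ref{def:IFP}. Since $P$ is fixed-order dynamically stratified, the selected literals can be resolved left-to-right, and the {\em true} answers produced in the trees correspond exactly to the atoms entered into the $\cT$ component of the iterated fixed point. The \boundedtermsize{} bound $L$ on $True^P_I$ then guarantees that every answer appearing in any tree has size at most $L$, so within a single tree only finitely many distinct answers (up to variance) can be generated, and each SLG {\sc Program Clause Resolution} and {\sc Answer Resolution} step produces goals built from these bounded atoms. Consequently each tree has finitely many nodes. Finite breadth together with finite depth of the forest yields finite termination, where the evaluation may halt normally or flounder on a non-ground negative literal.

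The main obstacle I anticipate is the mismatch, flagged in the paragraph preceding the theorem, between what Definition~\ref{def:bts-queries} bounds and what SLG termination requires: the \boundedtermsize{} property constrains only the substitutions used in $True^P_I$, i.e. in deriving \emph{true} atoms, and says nothing about $False^P_I$ or about undefined atoms. The delicate part of the proof is therefore showing that bounding the \emph{positive} (true-atom) part of the computation, combined with term-depth abstraction on subgoals, suffices to bound the \emph{entire} SLG forest even in the presence of negation and potential floundering. I expect this to require a careful induction over the dynamic strata: one shows stratum by stratum that the answers relevant to the query remain within the size bound, relying on fixed-order dynamic stratification to ensure that negative literals are only selected once their subgoals are completely evaluated in lower strata, so that no unbounded generation of subgoals can occur through the negative part of the computation. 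Reconciling term-depth abstraction with SLG's completion and answer-subsumption machinery, so that abstracted subgoals correctly consume answers without spawning infinitely many trees, is the technical heart of the argument.
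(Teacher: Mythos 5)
Your proposal is correct and follows essentially the same route as the paper's proof: term-depth abstraction in {\sc New Subgoal} bounds the number of trees, the \boundedtermsize{} bound (with delay-free evaluation producing no conditional answers) bounds the answers and hence the size of each tree, and negation is handled by induction over the fixed-order dynamic strata, with floundering counted as finite termination. The one device you describe informally --- that fixed-order stratification lets each negative literal be fully evaluated before it is resolved --- is exactly what the paper makes precise by citing the delay-minimal evaluations of \cite{SaSW99}, which avoid the {\sc Delaying}, {\sc Simplification} and {\sc Answer Completion} operations and hence never explore the undefined (unbounded) part of the model.
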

By the discussion of Section~\ref{sec:compare},
Theorem~\ref{thm:normal-tab-term} shows that there is an SLG
evaluation with term-depth abstraction will finitely terminate on any
ground query to a finitely recursive~\cite{BaBC09} or
finitely-ground~\cite{CCIL08} program that is fixed-order
stratified~\footnote{The proof of Theorem~\ref{thm:normal-tab-term}
  relies on a delay-minimal evaluation of $Q$ that does not produced
  any conditional answers -- that is, an evaluation that does not
  explore the space of atoms that are undefined in $WFM(P)$.}.  While
SLG itself is ideally complete for all normal programs, the PITA
implementation is restricted to fixed-order stratified programs, so
that Theorem~\ref{thm:normal-tab-term} is used in the proof of the
termination results of Section~\ref{correctness}.

\section{Program Transformation}
\label{algorithm}
The first step of the PITA algorithm is to apply a program
transformation to an LPAD to create a normal program that
contains calls for manipulating BDDs.  In our implementation, these
calls provide a Prolog interface to the
CUDD\footnote{\url{http://vlsi.colorado.edu/~fabio/}} C
library and use the following predicates\footnote{BDDs are
  represented in CUDD as pointers to their root node.}
\begin{itemize}
  \item \textit{init, end}: for allocation and deallocation of a BDD manager, a data structure used to keep track of the memory for storing BDD nodes;
  \item \textit{zero(-BDD), one(-BDD), and(+BDD1,+BDD2,-BDDO), or(+BDD1,+BDD2,\\ -BDDO), not(+BDDI,-BDDO)}: Boolean operations between BDDs;
  \item \textit{add\_var(+N\_Val,+Probs,-Var)}: addition of a new multi-valued variable with \textit{N\_Val} values and parameters \textit{Probs};
  \item \textit{equality(+Var,+Value,-BDD)}: \textit{BDD} represents \textit{Var=Value}, i.e. that the random variable {\em Var} is assigned {\em Value} in the BDD;
  \item \textit{ret\_prob(+BDD,-P)}: returns the probability of the formula encoded by \textit{BDD}.
  \end{itemize}
%
%
\textit{add\_var(+N\_Val,+Probs,-Var)} adds a new random variable associated to a new  instantiation of a rule with \textit{N\_Val} head atoms and parameters list \textit{Probs}.
The PITA transformation uses the auxiliary  predicate \textit{get\_var\_n(+R,+S,+Probs,-Var)} to wrap \textit{add\_var/3} and avoid adding a new variable when one already exists for an instantiation. As shown below, a new fact \textit{var(R,S,Var)} is asserted each time a new random variable is created, where \textit{R} is an identifier for the LPAD clause, \textit{S} is a list of constants, one for each variable of the clause, and \textit{Var} is an integer that identifies the random variable associated with clause \textit{R} under the grounding represented by \textit{S}. The auxiliary predicate has the following definition

\[\begin{array}{ll}
get\_var\_n(R,S,Probs,Var)\leftarrow\\
\ \ \   (var(R,S,Var)\rightarrow true;\\
\ \ \ \    length(Probs,L), add\_var(L,Probs,Var), assert(var(R,S,Var))).
\end{array}\]

\noindent
%
The PITA transformation applies to atoms, literals and clauses.
  If $H$ is an atom, $PITA_H(H)$ is $H$ with
the variable $BDD$ added as the last argument. 
 If $A_j$ is an atom,
$PITA_B(A_j)$ is $A_j$ with the variable $B_j$ added as the last
argument. 
In either case for an atom $A$, $BDD(PITA(A))$ is the value of the last
argument of $PITA(A)$, 
 If $L_j$ is negative literal $\neg A_j$, $PITA_B(L_j)$ is the conditional 
\[(PITA_B'(A_j)\rightarrow not(BN_j,B_j);one(B_j)),\]
 where $PITA_B'(A_j)$ is $A_j$ with the variable $BN_j$ added as the last argument.  
In other words the input BDD, $BN_j$, is negated if it exists; otherwise the BDD for the constant function $1$ is returned.

 A non-disjunctive fact $C_r=H$ is transformed into the clause  

\[PITA(C_r)=\ \ PITA_H(H)\leftarrow one(BDD).\]

\noindent A disjunctive fact $C_r=H_1:\alpha_1\vee \ldots\vee H_n:\alpha_n$.
where the parameters sum to 1, is transformed into the set of clauses $PITA(C_r)$\footnote{The second argument of $get\_var\_n$ is the empty list because a fact does not contain variables since the program is \boundedtermsize{}.}

\[\begin{array}{lll}
PITA(C_r,1)=&PITA_H(H_1)\leftarrow &get\_var\_n(r,[],[\alpha_1,\ldots,\alpha_n],Var),\\
&&equality(Var,1,BDD).\\
\ldots\\
PITA(C_r,n)=&PITA_H(H_n)\leftarrow &get\_var\_n(r,[],[\alpha_1,\ldots,\alpha_n],Var),\\
&&equality(Var,n,BDD).\\
\end{array}\]

\noindent
In the case where the parameters do not sum to one, the clause is first transformed into
$H_1:\alpha_1\vee \ldots\vee H_n:\alpha_n\vee null:1-\sum_{1}^n\alpha_i.$
and then into the clauses above, where the list of parameters is $[\alpha_1,\ldots,\alpha_n,1-\sum_{1}^n\alpha_i]$ but the $(n+1)$-th clause (the one for $null$)
is not generated. 

\noindent
The definite clause $C_r=H\leftarrow L_1,\ldots,L_m$.
is transformed into the clause 

\[\begin{array}{lll} 
PITA(C_r)=& PITA_H(H)\leftarrow &one(BB_0),\\
&&PITA_B(L_1),and(BB_0,B_1,BB_1),\\
&&\ldots,\\
&&PITA_B(L_m),and(BB_{m-1},B_m,BDD).
\end{array}\]

\noindent
The disjunctive clause

\[C_r=H_1:\alpha_1\vee \ldots\vee H_n:\alpha_n\leftarrow L_1,\ldots,L_m.\]

\noindent
where the parameters sum to 1, is transformed into the set of clauses $PITA(C_r)$

\[\begin{array}{lll} 
PITA(C_r,1)=& PITA_H(H_1)\leftarrow&one(BB_0),\\
&&PITA_B(L_1),and(BB_0,B_1,BB_1),\\
&&\ldots,\\
&&PITA_B(L_m),and(BB_{m-1},B_m,BB_m),\\
&&get\_var\_n(r,VC,[\alpha_1,\ldots,\alpha_n],Var),\\
&&equality(Var,1,B),and(BB_m,B,BDD).\\
\ldots\\
PITA(C_r,n)=&PITA_H(H_n)\leftarrow&one(BB_0),\\
&&PITA_B(L_1),and(BB_0,B_1,BB_1),\\
&&\ldots,\\
&&PITA_B(L_m),and(BB_{m-1},B_m,BB_m),\\
&&get\_var\_n(r,VC,[\alpha_1,\ldots,\alpha_n],Var),\\
&&equality(Var,n,B),and(BB_m,B,BDD).
\end{array}\]

\noindent
where  $VC$ is a list containing each variable appearing in $C_r$.
If the parameters do not sum to 1, the same technique used for disjunctive facts is used.

\begin{example}
\label{ex-tra}
Clause $C_1$ from the LPAD of Example \ref{sneezing_lpad} is translated into

\[
 \begin{array}{llll}
 strong\_sneezing(X,BDD)&\leftarrow one(BB_0), \mathit{flu}(X,B_1),and(BB_0,B_1,BB_1),\\
 &get\_var\_n(1,[X],[0.3,0.5,0.2],Var),\\
 &equality(Var,1,B),and(BB_1,B,BDD).\\
 moderate\_sneezing(X,BDD)&\leftarrow one(BB_0), \mathit{flu}(X,B_1),and(BB_0,B_1,BB_1),\\
 &get\_var\_n(1,[X],[0.3,0.5,0.2],Var),\\
&equality(Var,2,B),and(BB_1,B,BDD).
\end{array}
\]

\noindent
while 
clause $C_3$ is translated into

\[
 \begin{array}{llll}
 flu(david,BDD)&\leftarrow one(BDD).
\end{array}
\]
\end{example}
In order to answer queries, the goal {\em prob(Goal,P)} is used, which is defined by 

\[
 \begin{array}{llll}
prob(Goal,P)&\leftarrow   init, retractall(var(\_,\_,\_)), \\
&add\_bdd\_arg(Goal,BDD,GoalBDD),\\
\ \ \   &(call(GoalBDD)\rightarrow ret\_prob(BDD,P) ; P=0.0),\\
&end.
\end{array}
\]

\noindent
where $add\_bdd\_arg(Goal,BDD,GoalBDD)$ implements $PITA_H(Goal)$.
Moreover, various predicates of the LPAD should be declared as tabled. For a predicate $p/n$, the declaration is
{\em table p(\_1,...,\_n,or/3-zero/1)},
which indicates that answer subsumption
is used to form the disjunct of multiple explanations. At a minimum,
the predicate of the goal and all the predicates appearing in negative literals should be tabled with answer subsumption. As shown in Section \ref{exp}, it is usually better to table every predicate whose answers have multiple explanations and are going to be reused often.

\section{Correctness of PITA Evaluation}~\label{correctness}
\noindent
In this section we show a result regarding the PITA transformation and
its tabled evaluation on \boundedtermsize{} queries: this result takes
as a starting point the well\--definedness result of
Theorem~\ref{thm:finite-lpad}.



The main result of this section, Theorem~\ref{eval-s-n-c}, makes
explicit mention of BDD data structures, which are considered to be
ground terms for the purposes of formalization and are not specified
further.  Accordingly, the BDD operations used in the PITA
transformation: {\em and/3}, {\em or/3}, {\em not/2}, {\em one/1},
{\em zero/1}, and {\em equality/3}, are all taken as (infinite)
relations on terms, so that these predicates can be made part of a
program's ground instantiation in the normal way.  As a result, the
ground instantiation of $PITA(T)$ instantiates all variables in $T$
with all BDD terms.  Similarly, for the purposes of proving
correctness, a ground program is assumed to be extended with the
relation \textit{var(RuleName,[],Var)} to associate a random variable
with the identifier of each clause (see Appendix C for more details).
Note that since Theorem~\ref{eval-s-n-c} assumes a \boundedtermsize{}
query, the semantics is well-defined so the BDD and {\em var/3} terms
are finite. In other words, the representation of each explanation of
each atom are finite, and each atom has a finite covering set of
explanations.
Lemma \ref{lem:pita-bts} shows that the PITA transformation does not
affect the property of a query being \boundedtermsize. a result that
is used in the proof of Theorem~\ref{eval-s-n-c}.
\begin{lemma} \label{lem:pita-bts}
Let $T$ be an LPAD and $Q$ a \boundedtermsize{} query to $T$.  Then
the query $PITA_H(Q)$ to $PITA(T)$ has \boundedtermsize{}.
\end{lemma}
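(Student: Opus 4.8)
The plan is to reduce the claim to three separate size bounds on the ground substitutions that the $True$ operator of Definition~\ref{def:lrdyn-ops} uses when building $WFM(PITA(T)_{PITA_H(Q)})$: one for the arguments inherited from $T$, one for the random variables, and one for the BDDs. First I would characterize the relevant program. Since $PITA_H$ and $PITA_B$ merely add a last argument to each atom, and the body of every transformed clause contains $PITA_B(L_j)$ (or $PITA_B'(A_j)$ for a negative $L_j$) for each original body literal together with atoms of the fixed auxiliary predicates $one/1$, $and/3$, $or/3$, $not/2$, $equality/3$ and $var/3$, reachability from $PITA_H(Q)$ in the atom dependency graph of $PITA(T)$ (Definition~\ref{def:adg}) projects exactly onto reachability from $Q$ in the atom dependency graph of $T$, plus the auxiliary predicates. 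Hence $PITA(T)_{PITA_H(Q)}$ is the \algorithm{} transformation of $T_Q$ extended with the auxiliary relations. Because $Q$ is \boundedtermsize{}, $T_Q$ is a \boundedtermsize{} program, so by Definition~\ref{def:bts} there is a bound $L_Q$ on the size of every substitution used by $True$ in any world of $T_Q$; the restriction to the original variables of a substitution used by $True^{PITA(T)}_I$ agrees with a substitution used by $True$ in some world of $T_Q$, and so has size below $L_Q$.

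Next I would bound the remaining arguments using the finiteness of the language. Combining $L_Q$ with the bound $L_T$ on the size of true atoms supplied by the proof of Theorem~\ref{thm:finite-lpad}, and the standing assumption of finitely many function and constant symbols, only finitely many ground clause instances of bounded size are relevant; by the correctness formalization each is tied to a single random variable through the $var/3$ relation (whose second argument is the empty list, since $T_Q$ is \boundedtermsize{}). There is therefore a finite set $\mathcal V$ of relevant random variables, each of bounded size, so every $Var$ returned by $get\_var\_n$ is bounded. Every BDD that binds a BDD variable in a $True$ application is assembled from variables of $\mathcal V$ by $and/3$, $or/3$, $not/2$ and $equality/3$; because a BDD canonically represents a Boolean function, conjoining or disjoining the same variable is idempotent, so only finitely many distinct BDDs over $\mathcal V$ exist, each of some bounded size $L_B$. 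Taking $L$ to be the maximum of $L_Q$, the random-variable bound, and $L_B$, augmented by a constant for the fixed enclosing term structure, bounds every substitution used by $True^{PITA(T)}_I$ in constructing $WFM(PITA(T)_{PITA_H(Q)})$, which is exactly the statement that $PITA_H(Q)$ is \boundedtermsize{}.

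I expect the BDD bound to be the main obstacle. A priori a recursive derivation --- especially one traversing a positive cycle --- could build ever larger BDDs and make $True$ produce infinitely many answers $PITA_H(A,b)$ of unbounded size, which would defeat the lemma. The crux is that, since only finitely many ground clause instances are relevant, a cycle traversal re-derives the \emph{same} ground clause and hence $get\_var\_n$ returns the \emph{same} random variable; canonicity of the BDD representation then makes the repeated conjunction idempotent, so the canonical BDD does not grow and the reachable set of BDDs stays finite and of bounded size. Verifying this carefully --- that $get\_var\_n$ is deterministic on a repeated grounding and that the iterated fixed point of Definition~\ref{def:IFP} consequently closes --- is where the real effort lies, the bounds on the original and random-variable arguments being comparatively routine consequences of $T_Q$ being \boundedtermsize{}.
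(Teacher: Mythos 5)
Your proposal is correct in substance, but it takes a genuinely different route from the paper's own proof, which is far more compact. The paper argues as follows: both \emph{var/3} and the BDD relations (\emph{and/3}, \emph{or/3}, \emph{not/2}, etc.) are \emph{functional} on their input arguments, so for each body of a clause $C$ made true by an application of $True_I^{T_Q}$ there are exactly $n$ true bodies in the corresponding application of $True_I^{PITA(T_Q)}$, where $n$ is the number of head atoms of $C$; hence the substitution bound for $T_Q$ transfers to $PITA(T_Q)$. It also remarks that, the transformation being purely syntactic, the lemma holds even when the LPAD is not sound. You instead decompose each ground substitution into its inherited part, its random\--variable part and its BDD part, and bound each separately, the key step being that only finitely many canonical BDDs exist over the finite set $\mathcal{V}$ of relevant random variables. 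What your route buys is an explicit treatment of precisely the point the paper leaves implicit: functionality alone does not prevent the BDD arguments from growing without bound around a positive cycle (if \emph{and/3} were mere term pairing, the iterated fixed point of Definition~\ref{def:IFP} would keep producing larger and larger answers), so a semantic property of the BDD relations --- canonicity, hence idempotence of the repeated conjunction --- is genuinely needed, and you supply it. What it costs is two imports the paper's proof avoids: canonicity itself, which holds for reduced ordered BDDs (as in CUDD) but is not part of the paper's stated formalization, where the BDD predicates are only assumed to be functional relations on otherwise unspecified ground terms; and the uniform across\--world bound $L_T$ taken from the proof of Theorem~\ref{thm:finite-lpad}, whose statement assumes soundness, so your argument as written quietly gives up the paper's observation that the lemma is independent of soundness.
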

Theorem~\ref{eval-s-n-c} below states the correctness of the tabling
implementation of PITA, since the BDD returned for a tabled query is
the disjunction of a covering set of explanations for that query.  The
proof uses an extension of SLG evaluation that includes answer
subsumption to collect explanations by disjoining BDDs, but that is
restricted to the fixed-order dynamically stratified programs of
Section~\ref{mod-strat}.  This formalism models the programs and
implementation tested in Section~\ref{exp}.
%
%
\begin{theorem}[Correctness of PITA Evaluation] \label{eval-s-n-c} 
Let $T$ be a fixed-order dynamically stratified LPAD and $Q$ a ground
\boundedtermsize{} atomic query. Then there is an SLG evaluation $\cE$ of
$PITA_H(Q)$ against $PITA(T_Q)$, such that answer subsumption is
declared on $PITA_H(Q)$ using BDD-disjunction where $\cE$ finitely
terminates with an answer $Ans$ for $PITA_H(Q)$ and $BDD(Ans)$
represents a covering set of explanations for $Q$.
\end{theorem}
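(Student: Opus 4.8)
The plan is to prove the two assertions of the theorem—finite termination and correctness of $BDD(Ans)$—by reducing the first to results already established and the second to an induction on dynamic strata that matches the BDD operations emitted by the PITA transformation against operations on sets of explanations.

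For termination, I would first observe that $PITA(T_Q)$ is itself a fixed-order dynamically stratified normal program. The transformation adds only the deterministic BDD-manipulating calls ({\em one/1}, {\em and/3}, {\em equality/3}, {\em not/2}) and preserves the signed dependencies among the logical predicates—in particular, the if-then-else encoding a negative literal $\neg A_j$ keeps a negative dependence on $A_j$—so the stratum structure is inherited from the fixed-order dynamically stratified worlds of $T$. By Lemma~\ref{lem:pita-bts}, $PITA_H(Q)$ is a \boundedtermsize{} query to $PITA(T_Q)$, so Theorem~\ref{thm:normal-tab-term} supplies an SLG evaluation with term-depth abstraction of the logical arguments that finitely terminates. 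Adding answer subsumption only merges answers sharing a logical part, replacing them by their {\em or/3}-disjunction, which can only decrease the number of distinct answers per subgoal and hence preserves termination. The last point to check is that the BDD arguments, although drawn from the infinite relations {\em and/3}, {\em or/3}, etc., are finite ground terms; this follows from Theorem~\ref{thm:finite-lpad}, since $T$ sound and $Q$ \boundedtermsize{} make the semantics well-defined, so every atom reachable from $Q$ has a finite covering set of finite explanations.

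For correctness, I would extend the forest-of-trees SLG formalism with an answer-subsumption operation that disjoins the BDD arguments of answers with identical logical parts, and then argue by induction on the dynamic strata of Definition~\ref{def:IFP}. The invariant is: once the table for the predicate of a ground atom $A$ is completed, its unique answer carries a BDD that represents a covering set $K$ of explanations for $A$, i.e. the represented function is $f_K$ (Section~\ref{bdd}) and $\omega_K = \{w \in \cW_T \mid w \models A\}$. The inductive step reads off the clause transformations of Section~\ref{algorithm}: each {\em equality(Var,i,B)} contributes the atomic choice $(C,\theta,i)$, i.e. the equation $X_{C\theta}=i$; the chain of {\em and/3} calls along a body conjoins the body literals' BDDs; and answer subsumption's {\em or/3} disjoins the contributions of the distinct clause instances deriving $A$. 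The accumulated BDD is therefore the disjunction, over all derivations of $A$, of a conjunction of one atomic choice with the body BDDs—exactly the Boolean function of a covering set of composite choices for $A$.

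I expect the main obstacle to be the treatment of negative body literals, which is precisely where the covering property and fixed-order stratification are indispensable. For $\neg A_j$ the transformation returns $not(BN_j,B_j)$, the complement of $A_j$'s BDD; for this to represent the worlds in which $A_j$ is false, the inductive hypothesis must furnish not merely \emph{some} explanation set for $A_j$ but a \emph{covering} one, so that complementation yields exactly $\{w \mid w \not\models A_j\}$. This requires that the table for $A_j$ be fully completed—all explanations already accumulated by answer subsumption—before $\neg A_j$ is evaluated. Fixed-order dynamic stratification, through the left-to-right prefix condition in the modified $False^P_I$ of Section~\ref{mod-strat}, guarantees exactly this: a negative literal is selected only after the subgoal it negates has been settled at a strictly lower stratum. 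I would therefore align the stratum induction with SLG's completion order so that completeness of $A_j$'s table is available when {\em not/2} is applied, and finish by handling the boundary case in which $A_j$ has no answers—so $A_j$ is false and the transformation correctly returns $one(B_j)$, the BDD for the everywhere-true function.
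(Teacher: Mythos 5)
Your proposal is correct and follows essentially the same route as the paper's proof: termination by combining Lemma~\ref{lem:pita-bts} and Theorem~\ref{thm:normal-tab-term} with an explicit answer-subsumption operation added to forest-of-trees SLG, and the covering property of $BDD(Ans)$ by an induction in which \emph{equality/3}, \emph{and/3}, \emph{not/2} and \emph{or/3} mirror atomic choice, conjunction, complementation and union of explanation sets. The only notable organizational differences are that you index the correctness induction by dynamic strata with a table-completion invariant (the paper inducts on the number of BDD operations), and that you are somewhat more explicit than the paper about why the negation case needs a \emph{covering} set and the completion order guaranteed by fixed-order stratification, whereas the paper's formal bookkeeping of the {\sc Answer Join} operation (each application deletes at least one answer, and groundness of answers via range-restriction makes the operation applicable) is more detailed than your termination sketch.
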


\section{Related Work}
\label{related}
 \cite{DBLP:conf/iclp/MantadelisJ10} presented an algorithm for answering queries to ProbLog programs that uses tabling. Our work differs from this in two important ways. The first is that we use directly XSB tabling with
answer subsumption while  \cite{DBLP:conf/iclp/MantadelisJ10}  use some user-defined predicates
that manipulate  extra tabling data structures.  The second
difference is that in  \cite{DBLP:conf/iclp/MantadelisJ10} explanations are stored in trie data structures
that are then translated into BDDs.  When
translating the tries into BDDs, the algorithm of \cite{DBLP:conf/iclp/MantadelisJ10} finds shared
substructures, i.e., sub-explanations shared by many explanations. By
identifying shared structures the construction of BDDs is sped up
since sub-explanations are transformed into BDD only once.  In our
approach, we similarly exploit the repetition of structures but we do
it while finding explanations: by storing in the table the BDD
representation of the explanations of each answer,  every time the
answer is reused its BDD does not have to be rebuilt. Thus our
optimization is guided by the derivation of the query. Moreover, if a
BDD is combined with another BDD that already contains the first as a
subgraph, we rely on the highly optmized CUDD functions for the
identification of the repetition and the simplification of the
combining operation.  In this way we exploit structure sharing as well
without the intermediate pass over the trie data strucutres.

\section{Experiments}
\label{exp}
\algorithm\ was tested on two datasets that contain function symbols: the first is taken from \cite{VenVer04-ICLP04-IC} and encodes a Hidden Markov Model (HMM) while the second from \cite{DBLP:conf/ijcai/RaedtKT07} encodes biological networks. Moreover, it was also tested on the four testbeds of \cite{MeeStrBlo08-ILP09-IC} that do not contain function symbols. \algorithm\ was compared with the exact version of ProbLog
 \cite{DBLP:conf/ijcai/RaedtKT07}  available in the git version of Yap as of 10 November 2010, with the version of \texttt{cplint} \cite{Rig-AIIA07-IC} available in Yap 6.0 and with the version of CVE \cite{MeeStrBlo08-ILP09-IC} available in ACE-ilProlog 1.2.20\footnote{All experiments were performed on Linux machines with an Intel Core 2 Duo E6550 (2333 MHz) processor and 4 GB of RAM.}.
%

The first problem  models a hidden Markov model with  states 1, 2 and 3, of which 3 is an end state. This problem is encoded by the program

\begin{tabbing}
fooooo\==fooooo\=oooooooooooooooooooooooooo\=ooooooooooooo\=\kill 
\> \textit{s(0,1):1/3 $\vee$ s(0,2):1/3 $\vee$ s(0,3):1/3.} \\
\> \textit{s(T,1):1/3 $\vee$  s(T,2):1/3 $\vee$  s(T,3):1/3 $\leftarrow$} \\
\> \> \textit{T1 is  T-1, T1$>$=0, s(T1,F), $\backslash$+ s(T1,3).}
\end{tabbing}

\noindent 
For this experiment, we query the probability of the HMM being in state 1 at time \textit{N}
for increasing values of \textit{N}, i.e., we query the probability of \textit{s(N,1)}.
In \algorithm\  and ProbLog, we did not use reordering of BDDs variables\footnote{For each experiment with PITA and ProbLog, we used either group sift automatic reordering or no reordering of BDDs variables depending on which gave the best results.}. In PITA we tabled $on/2$ and in ProbLog  we tabled the same predicate using the technique described in \cite{DBLP:conf/iclp/MantadelisJ10}.
The execution times of \algorithm , ProbLog, CVE and \texttt{cplint} are shown in Figure \ref{die}.
In this problem tabling provides an impressive speedup, since computations can be reused often.

\begin{figure}
\begin{center}
\includegraphics[width=.55\textwidth]{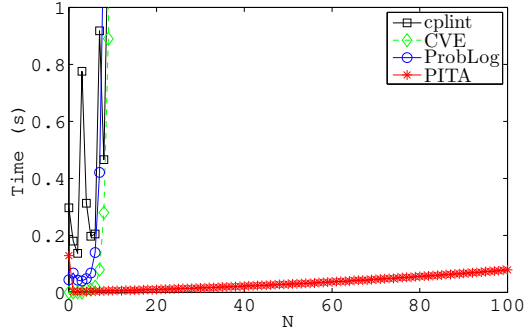}
\end{center}
\caption{Hidden Markov model.}
\label{die}
\end{figure}
The biological network programs compute the probability of a path in a large graph in which the nodes encode biological entities and the links represents conceptual relations among them. Each program in this dataset contains a non\--probabilistic definition of path plus a number of links represented by probabilistic facts. The programs have been sampled from a very large graph and contain 200, 400, $\ldots$, 10000 edges. 
Sampling was repeated ten times, to obtain ten series of programs of increasing size. In each program we query the probability that the two genes HGNC\_620 and HGNC\_983 are related.
We used two definitions of path. The first, from \cite{ProbLog-impl}, performs loop checking explicitly by keeping the list of visited nodes:

\begin{equation}\begin{array}{lll}
\label{loop}
path(X,Y)&\leftarrow &path(X,Y,[X],Z).\label{p}\\
path(X,Y,V,[Y|V])&\leftarrow &arc(X,Y).\label{p41}\\                 
path(X,Y,V0,V1)&\leftarrow &arc(X,Z),append(V0,\_S,V1),\\
&&\backslash + member(Z,V0),path(Z,Y,[Z|V0],V1).\label{p42}\\
arc(X,Y)&\leftarrow &edge(X,Y).\\
arc(X,Y)&\leftarrow &edge(Y,X).
\end{array}
\end{equation}
The second exploits tabling for performing loop checking:
\begin{equation}\begin{array}{lll}
\label{tab}
path(X,X).\\
path(X,Y,)&\leftarrow &path(X,Z),arc(Z,Y).\\
arc(X,Y)&\leftarrow &edge(X,Y).\\
arc(X,Y)&\leftarrow &edge(Y,X).
\end{array}
\end{equation}
The possibility of using  lists (that require function symbols) allowed in this case more modeling freedom. In PITA, the predicates $path/2$, $edge/2$ and $arc/2$ are tabled in both cases.
For ProbLog we used its implementation of tabling for loop checking in the second program. As in PITA,  $path/2$, $edge/2$ and $arc/2$ are tabled.
\begin{figure}
\centering
\subfigure
	[Number of successes.\label{graph_succ_8}]	{\includegraphics[width=.47\textwidth]{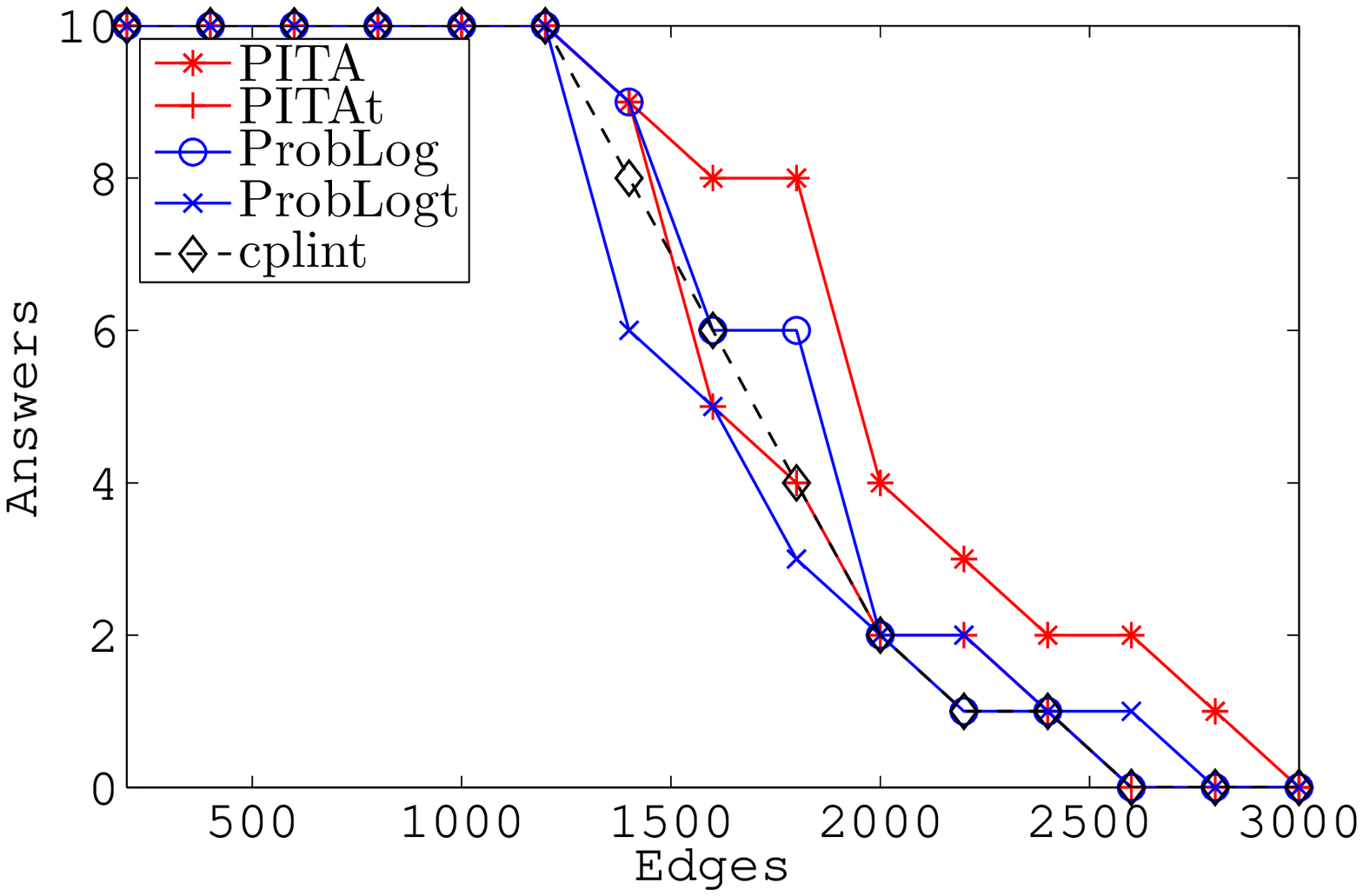}}
\hspace{0.2cm}
\subfigure
	[Average execution times on the graphs on which all the algorithm succeeded.\label{graph_times_8}]	{\includegraphics[width=.47\textwidth]{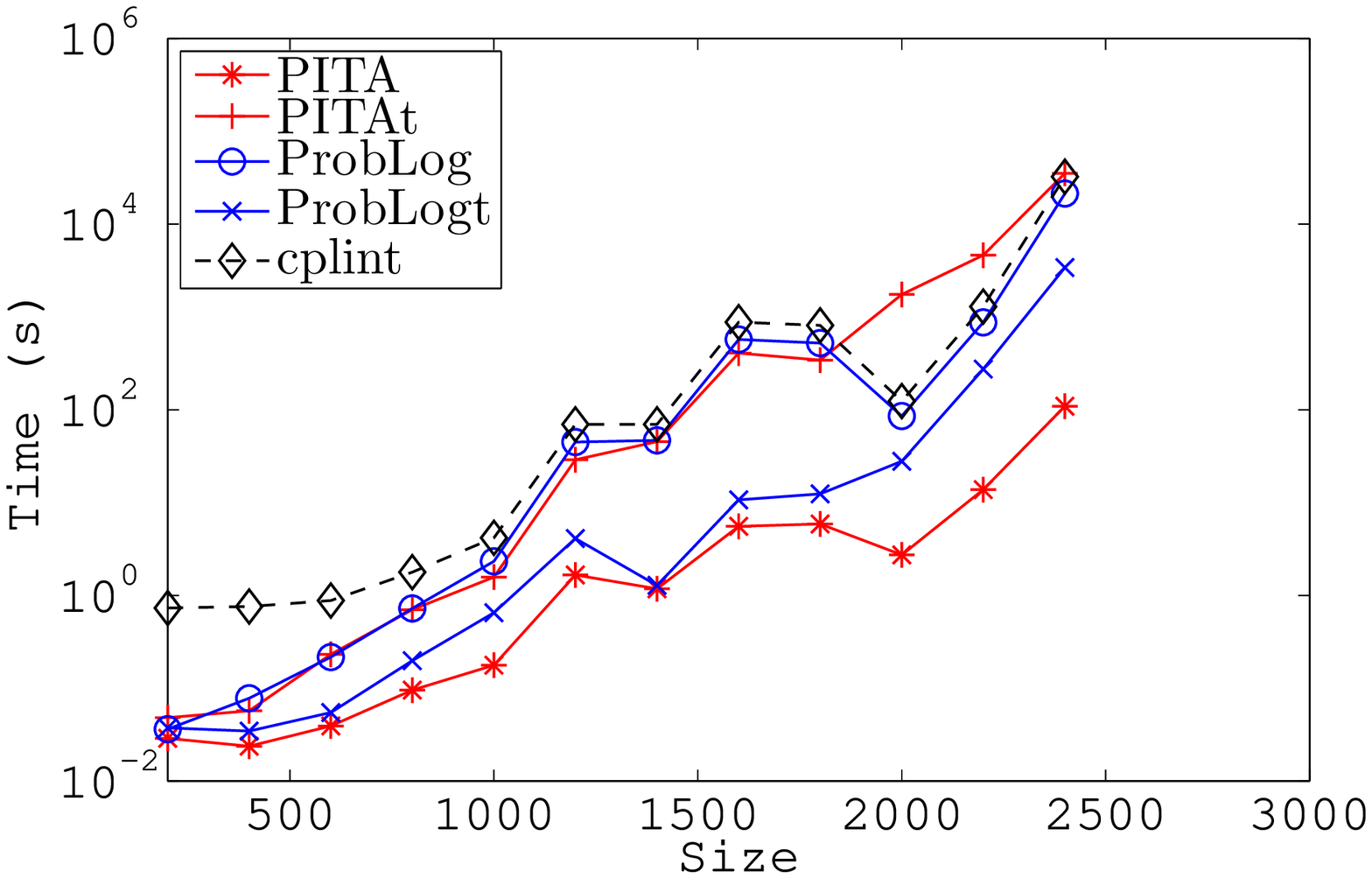}}
\caption{Biological graph experiments.}
\label{kimmig}
\end{figure}
\begin{figure}
	{\includegraphics[width=.47\textwidth]{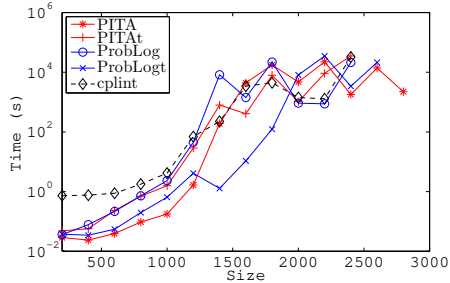}}
\caption{Average exection times on the biological graph experiments.}
\label{grapht}
\end{figure}

\begin{figure}
\centering
\subfigure
	[\texttt{bloodtype}.\label{blood}]
{\includegraphics[width=.47\textwidth]{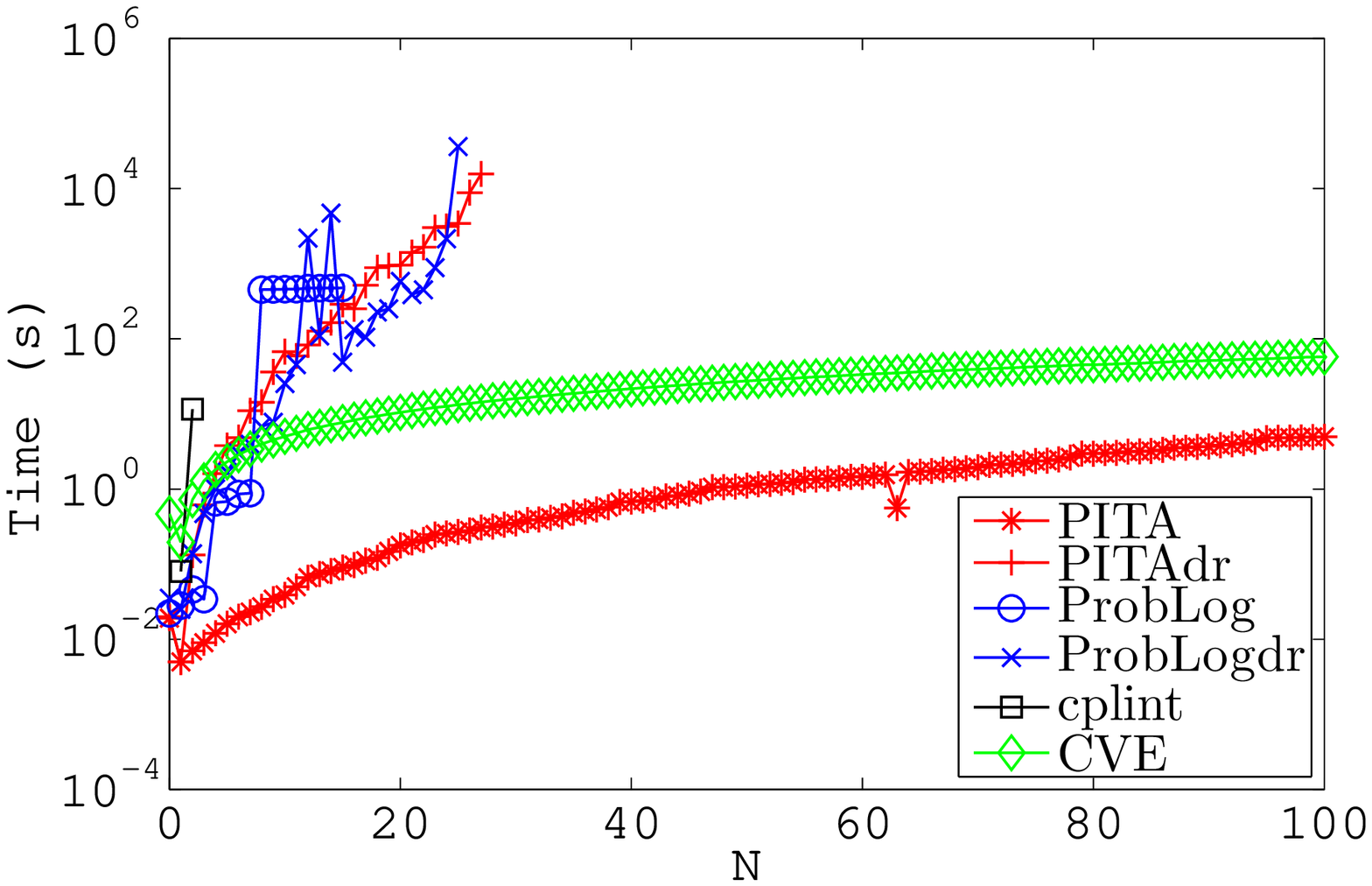}}
\hspace{0.2cm}
\subfigure
	[\texttt{growingbody}.\label{grnb}]	{\includegraphics[width=.47\textwidth]{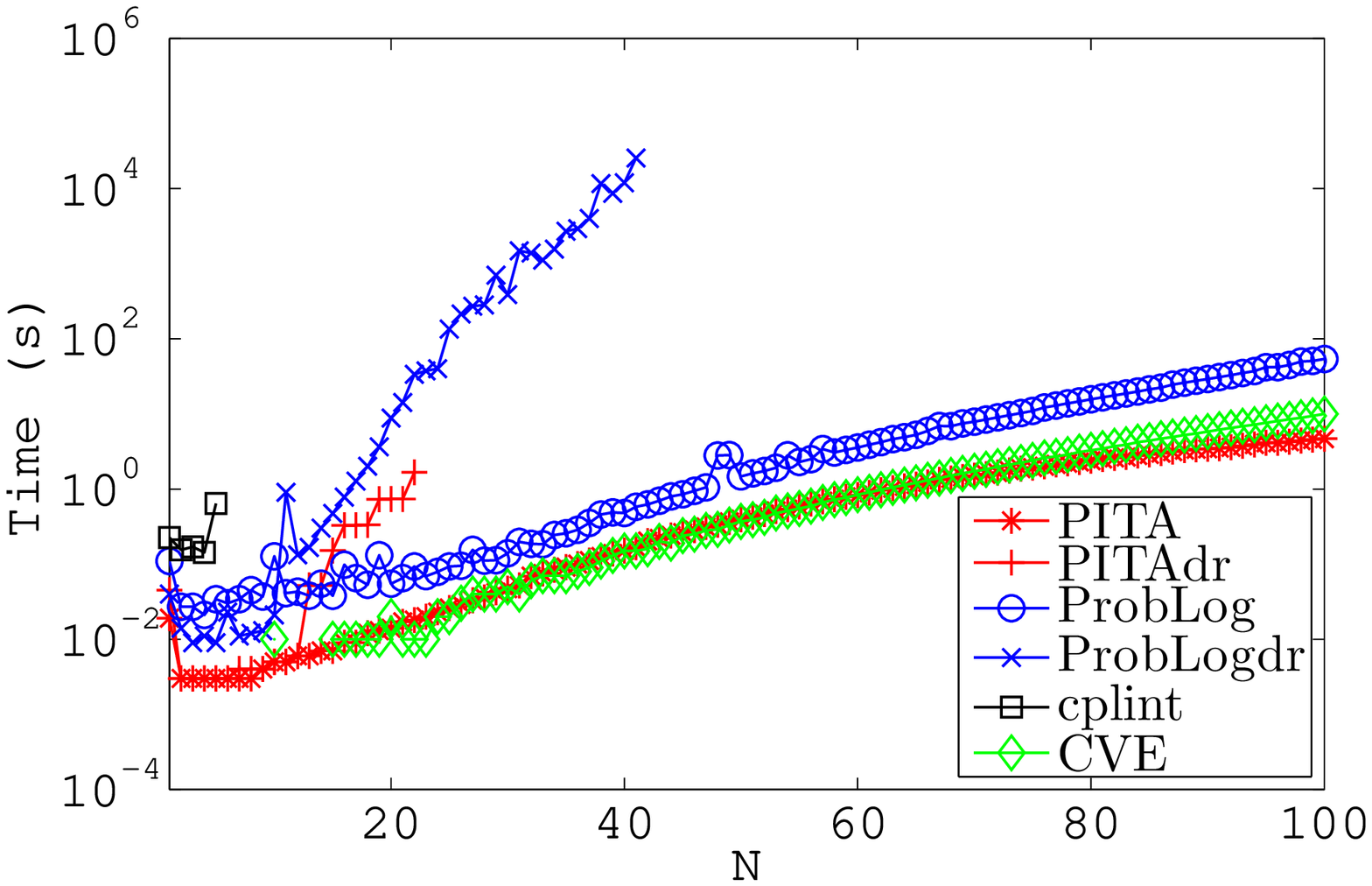}}
\caption{Datasets from (Meert et al. 2009).}
\label{wannes-1}
\end{figure}
\begin{figure}
\centering
\subfigure
	[\texttt{growinghead}.\label{grh}]
{\includegraphics[width=.47\textwidth]{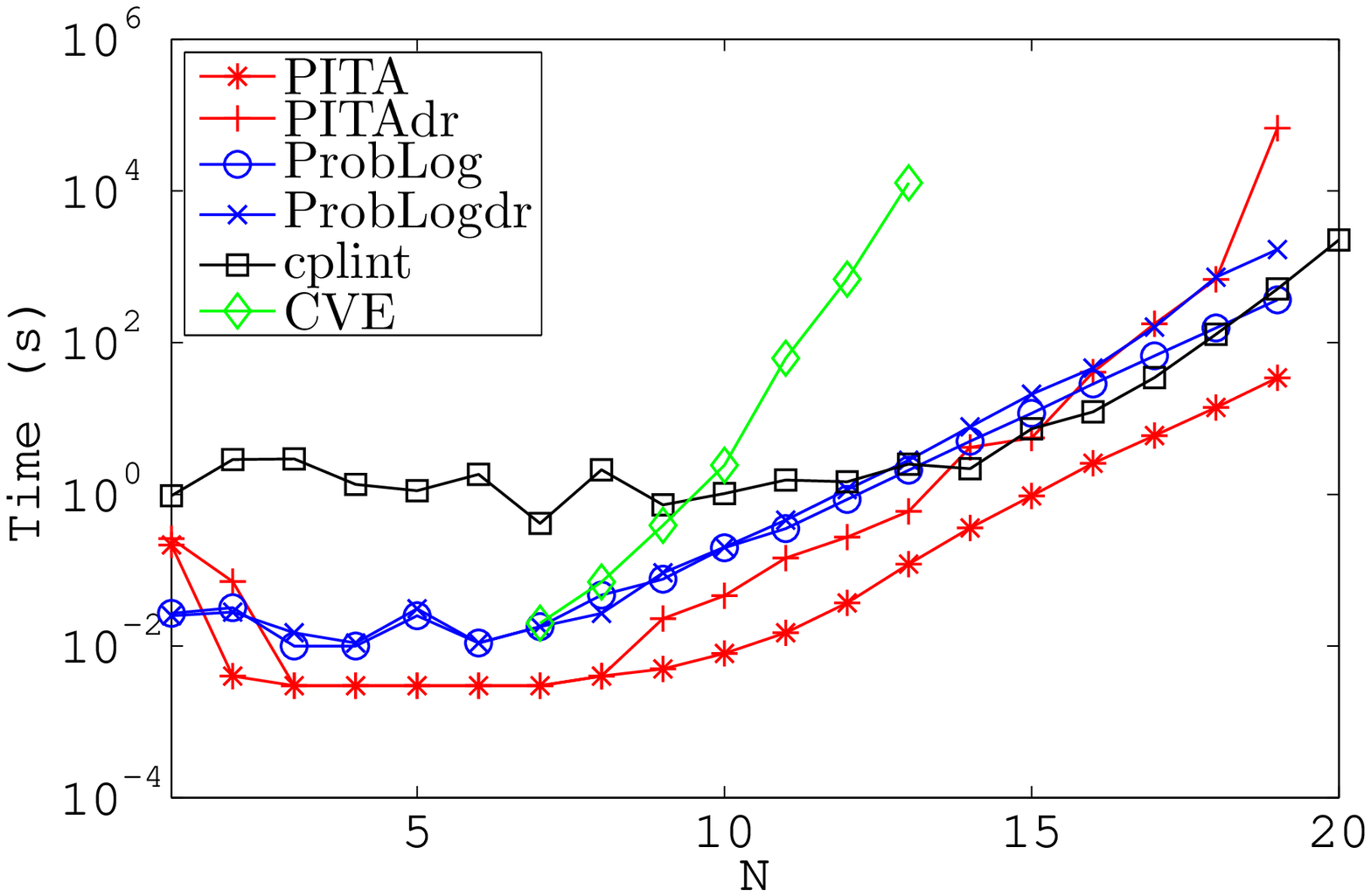}}
\hspace{0.2cm}
\subfigure
	[\texttt{uwcse}.\label{uwcse}]	{\includegraphics[width=.47\textwidth]{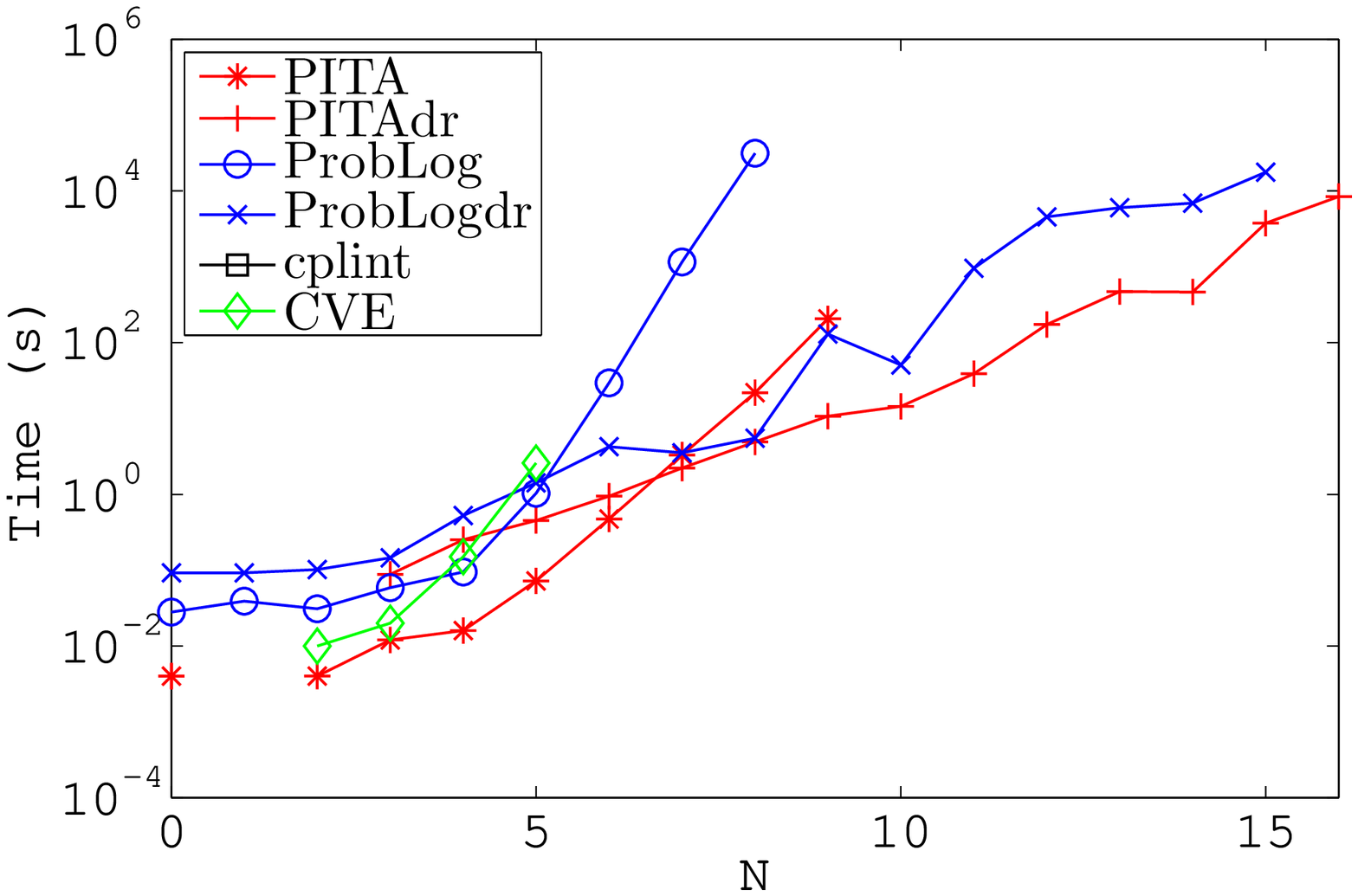}}
\caption{Datasets from (Meert et al. 2009).}
\label{wannes-2}
\end{figure}

We ran PITA, ProbLog and \texttt{cplint} on the graphs 
starting from the smallest program. In each series we stopped after
one day or at the first graph for which the program ended for lack of
memory\footnote{CVE was not applied to this dataset because the
  current version can not handle graph cycles.}.  In  \texttt{cplint}, \algorithm\ and ProbLog we used 
group sift  reordering of BDDs variables.  Figure
\ref{graph_succ_8} shows the number of subgraphs for which each
algorithm was able to answer the query as a function of the size of
the subgraphs, while Figure \ref{graph_times_8} shows the execution
time averaged over all and only the subgraphs for which all the
algorithms succeeded.  Figure \ref{grapht} alternately shows the execution times averaged, for each algorithm, over all the graphs on which the algorithm succeeded.
In these Figures PITA and PITAt refers to PITA applied to path programs (\ref{loop}) and (\ref{tab}) respectively  and similarly for ProbLog and ProbLogt.

\algorithm\ applied to program (\ref{loop}) was able to solve more subgraphs
and in a shorter time than \texttt{cplint} and all cases of ProbLog.  
On path definition (\ref{tab}), on the other hand, ProbLogt was able to solve a larger number of problems than PITAt and in a shorter time.
For PITA the
vast majority of time for larger graphs was spent on BDD maintenance.
This shows that, even if tabling consumes more memory when finding the explanations,  BDDs are built faster and use less memory,
probably due to the fact that tabling allows less redundancy (only one BDD is stored for an answer) and supports a bottom-up construction of the BDDs, which is usually better. 

The four datasets of \cite{MeeStrBlo08-ILP09-IC},
served as a final suite of benchmarks.  \texttt{bloodtype} encodes the genetic inheritance of blood type, \texttt{growingbody}  contains programs with growing bodies, \texttt{growinghead} contains programs with growing heads and \texttt{uwcse} encodes a university domain. 
The best results for ProbLog were obtained by using ProbLog's tabling in all experiments except \texttt{growinghead}.
The execution times of \texttt{cplint}, ProbLog, CVE and \algorithm\ are shown 
in Figures \ref{blood} and \ref{grnb}, \ref{grh} and \ref{uwcse}\footnote{For the missing points at the beginning of the lines a time smaller than $10^{-6}$ was recorded. For the missing points at the end of the lines the algorithm exhausted the available memory.}. In the legend PITA means that dynamic BDD variable reordering was disabled, while PITAdr has group sift automatic reordering enabled. Similarly for ProbLog and ProbLogdr.

In \texttt{bloodtype}, \texttt{growingbody} and  \texttt{growinghead} PITA without variable reordering was the fastest, while in  \texttt{uwcse} PITA with group sift automatic reordering was the fastest.
These results show that variable reordering has a strong impact on performances: if the variable order that is obtained as a consequence of the sequence of BDD operations is already good, automatic reordering severely hinders performances. Fully understanding the effect of variable reordering on performances is subject of future work.

\section{Conclusion and Future Works}
\label{conc}
This paper has made two main contributions.  The first is the
identification of \boundedtermsize{} programs and queries as
conditions for the distribution semantics to be well-defined when
LPADs contain function symbols.  As shown in
Section~\ref{sec:compare}, bounded-term-size programs and queries
sometimes include programs that other termination classes do not.
Given the transformational equivalence of LPADs and other
probabilistic logic programming formalisms that use the distribution
semantics, these results may form a basis for determining
well\--definedness beyond LPADs.

As a second contribution, the PITA transformation  provides a
practical reasoning algorithm that was directly used in the
experiments of Section~\ref{exp}.  The experiments substantiate the
PITA approach.
  Accordingly, \algorithm\  should be easily
portable to other tabling engines such as that of YAP, Ciao and B
Prolog if they support answer subsumption over general semi-lattices.
\algorithm{} is available in XSB Version 3.3 and later, downloadable from \url{http://xsb.sourceforge.net}.
A user manual is included in XSB manual and can also be found at \url{http://sites.unife.it/ml/pita}.

In the future, we plan to extend \algorithm\ to the whole class of
sound LPADs by implementing the SLG {\sc delaying} and {\sc
  simplification} operations for answer subsumption; an implementation
of tabling with term-depth abstraction (Section~\ref{tabling}) is also
underway.  Finally, we are developing a version of \algorithm\ that is
able to answer queries in an approximate way, similarly to
\cite{ProbLog-impl}.

\vspace{-0.3cm}
\bibliographystyle{acmtrans}
\bibliography{strings,bib,lpadlocal}

\appendix
%

\section{Proof of Well-Definedness Theorems (Section 4.1)}

To prove Theorem 1 we start with a lemma that states one half of the
equivalence, and also describes an implication of
the \boundedtermsize{} property for computation.

\begin{lemma}  \label{lem:finite-normal}
Let $P$ be a normal program with the \boundedtermsize{} property.
Then
\begin{enumerate}
\item Any atom in $WFM(P)$ has a finite stratum, and was computed by a
  finite number of applications of $True^P$.
\item There are a finite number of true atoms in $WFM(P)$.
\end{enumerate}
\end{lemma}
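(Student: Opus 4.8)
The plan is to use the finiteness of the language together with the \boundedtermsize{} bound to trap the entire positive part of the well-founded construction inside a fixed finite set of atoms, and then to show that the iterated fixed point of Definition~\ref{def:IFP} has already converged by stage $\omega$.

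I would first fix the bound $L$ of Definition~\ref{def:bts} and let $\cA_L$ be the set of ground atoms $B\theta$ where $B$ is the head of a clause of $P$ and $\theta$ is a grounding substitution of size less than $L$. Since $P$ is written over finitely many function and constant symbols, there are only finitely many ground terms of size below $L$, hence finitely many such $\theta$; as $P$ has finitely many clause heads, $\cA_L$ is finite. By the \boundedtermsize{} property, every atom that any application of $True^P_I$ (Definition~\ref{def:lrdyn-ops}) ever produces arises from a substitution of size below $L$, so it lies in $\cA_L$. Consequently each $\cT^P_I\subseteq\cA_L$, the inner iteration computing $\cT^P_I$ from $\emptyset$ climbs through subsets of the finite set $\cA_L$ and therefore reaches its fixed point after at most $|\cA_L|$ applications of $True^P$, and every true atom of $WFM(P)$ lies in $\cA_L$. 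This already establishes clause~2.

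For clause~1 I would prove that $\cT_{WFM_\omega}=\cF_{WFM_\omega}=\emptyset$; since $WFM_\omega=\bigcup_{n<\omega}WFM_n$, it then follows that $WFM(P)=WFM_\omega$, so every atom of the model first enters at a finite stage and thus has a finite stratum, and every true atom is produced within the finitely many $True^P$ applications used up to that finite stratum. Writing $\langle T_\alpha;F_\alpha\rangle$ for $WFM_\alpha$, the chain $T_0\subseteq T_1\subseteq\cdots$ lives in the finite set $\cA_L$, so along the finite ordinals it stabilises: $T_\omega=T_k$ for some finite $k$. Then $\cT_{WFM_\omega}=\emptyset$ by an induction on the inner iteration of $True_{WFM_\omega}(\emptyset)$: a newly produced atom would come from a clause whose finitely many positive body literals are true in $WFM_\omega$ (hence already in $T_k$) and whose negative body literals name atoms false in $WFM_\omega$ (hence in $F_\omega=\bigcup_n F_n$, each already false at some finite stage); by monotonicity of $T_\bullet$ and $F_\bullet$ all of these hold at a single common finite stage $M$, so the atom would already be true in $WFM_{M+1}$, contradicting that $True$ returns only atoms not true in $WFM_\omega$.

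The delicate step, which I expect to be the main obstacle, is $\cF_{WFM_\omega}=\emptyset$: the \boundedtermsize{} property bounds the substitutions of $True^P_I$ but not those of $False^P_I$, and $\cF^P_I$ is a \emph{greatest} fixed point, so it may declare infinitely many atoms false in a single stratum and, a priori, the false component might keep growing transfinitely even after $T_\bullet$ has frozen at $T_k$. I would rule this out with a post-fixed-point argument rather than an inductive one. Suppose $V:=\cF_{WFM_\omega}$ were nonempty; it is disjoint from $F_\omega$. Fix a finite $n\ge k$ and put $Fa':=(F_\omega\setminus F_n)\cup V$, which is disjoint from $F_n$. If $A\in V$ then, by the greatest-fixed-point property at stage $\omega$, every clause of $A$ is blocked by a positive literal in $F_\omega\cup V$ or by a negative literal whose atom is in $T_k=T_n$; if $A\in F_\omega\setminus F_n$, then $A$ was shown false at some finite stage $m>n$, so every clause of $A$ is blocked by a positive literal in $F_\omega$ or a negative literal in $T_k$. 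In both cases the blocking literals lie in $F_n\cup Fa'$ or in $T_n$, whence $Fa'\subseteq False_{WFM_n}(Fa')$, i.e. $Fa'$ is a post-fixed point of the monotone operator $False_{WFM_n}$. Since $\cF_{WFM_n}$ is its greatest fixed point, $Fa'\subseteq\cF_{WFM_n}\subseteq F_{n+1}\subseteq F_\omega$, forcing $V\subseteq F_\omega$ and hence $V=\emptyset$. The crux is thus separating blocking that is coinductive within the unfounded set, which is already captured by the greatest-fixed-point computation at every finite stage, from blocking by atoms established false earlier, which by monotonicity of $F_\bullet$ is available at some finite stage.
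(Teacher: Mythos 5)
Your proof is correct, and part 2 follows the paper's own route exactly: the bound $L$ over a finite alphabet confines everything $True^P_I$ can produce to the finite set $\cA_L$, and the true atoms of $WFM(P)$ are precisely the atoms so produced. For part 1, however, you do something genuinely beyond the paper's proof. The paper reasons only about the true operator --- $True^P_I$ is monotone, its inner iteration stabilizes in at most $|\cA_L|$ steps, and $\cT^P_I$ can be non-empty only finitely often --- and from this alone concludes that $WFM(P)$ has finitely many strata; it never examines whether $False^P_I$, whose substitutions are \emph{not} bounded by Definition~\ref{def:bts} and which is computed as a greatest fixed point, could keep generating new strata on its own after the true part has frozen. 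You identify exactly this as the delicate point and close it: your set $Fa'=(F_\omega\setminus F_n)\cup V$ is a post-fixed point of the monotone operator $False_{WFM_n}$, so by Knaster--Tarski it is absorbed into $\cF_{WFM_n}$, forcing $\cF_{WFM_\omega}=\emptyset$. In effect you prove the lemma the paper silently relies on: new false atoms at a stratum can only be enabled by new true atoms at an earlier one, never by the false operator feeding itself across strata. The paper's shortcut buys brevity (and its stronger assertion that the depth $\delta$ is finite is in fact true, via the alternation argument your post-fixed-point lemma makes possible); your version buys a complete argument, establishing $\delta\le\omega$, which is all the statement needs, since then every atom of $WFM(P)=WFM_\omega$ enters at some finite stage and every true atom is computed by finitely many applications of $True^P$.
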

\begin{proof}
For 2), note that bounding the size of $\theta$ as used in
Definition~2 bounds the size of the ground clause $B
\leftarrow L_1,...,L_n$, and so bounds the size of $True^P_I(Tr)$ for
any $I,Tr \subseteq \cH_P$.  Since the true atoms in $WFM(P)$ are
defined as a fixed-point of $True^P_I$ for a given $I$, there must be
a finite number of them.

Similarly, since the size of $\theta$ is bounded by an integer $L$,
and since $True^P_I$ is monotonic for any $I$ $True^P_I(\emptyset)$
reaches its fixed point in a finite number of applications, and in
fact only a finite number of applications of $True^P_I$ are required to
compute true atoms in $WFM(P)$.  In addition, it can be the case that
$\cT^P_I \not = I$ only a finite number of times, so that $WFM(P)$
can contain only a finite number of strata.
\end{proof}

\noindent
{\em Theorem~1} \\
Let $P$ be a normal program.  Then $WFM(P)$ has a finite number of
true atoms iff $P$ has the \boundedtermsize{} property. 

\begin{proof}
The $\Leftarrow$ implication was shown by the previous Lemma, so that
it remains to prove that if $WFM(P)$ has a finite number of true atoms,
then $P$ has the \boundedtermsize{} property.
To show this, since the number of true atoms in $WFM(P)$ is finite,
all derivations of true atoms using $True^P_I(Tr)$ of
Definition~2 can be constructed using only a finite
set of ground clauses.  For this to be possible, the maximum term size
of any literal in any such clause is finitely bounded, so that $P$ has
the \boundedtermsize{} property.
\end{proof}

\noindent
{\em Theorem~2} \\ 
Let $T$ be a sound \boundedtermsize{} LPAD, and let $A \in
\mathcal{H}_T$.  Then $A$ has a finite set of finite explanations that
is covering.
\begin{proof}
Let $T$ be an LPAD and $w$ be a world of $T$.  Each clause
$C_{ground}$ in $w$ is associated with a choice $(C,\theta,i)$, for
which $C$ and $i$ can both be taken as finite integers.  We term
$(C,\theta,i)$ the generators of $C_{ground}$.  
%
By Theorem 1 each world $w$ of T has a finite number of true atoms,
and a maximum size $L_w$ of any atom in such a world.  We prove that
the maximum $L_{T}$ of all such worlds
has a finite upper bound.

We first consider the case in which $T$ does not contain negation.  Consider a world
$w$ whose well-founded model has the finite bound $L_{w}$ on the size
of the largest atoms. We show that $L_{w}$ can not be arbitrarily large.

Since $L_{w}$ is finite, all facts in $T$ must be ground and all
clauses range-restricted: otherwise some possible world of $T$ would
contain an infinite number of true atoms and so would not be
\boundedtermsize{} by Theorem~1. 
There must be some set $G$ of generators
which acts on a chain of interpretations $I_0 \subset I_1 \subset I_n
\subset WFM(w)$, where $I_0$ is some superset of the facts in $w$, and
the maximum size of any atom in $I_i$ is strictly increasing.  Because
$WFM(w)$ is finite and $T$ is definite, the set of generators $G$ must be
finite.

We first show that $G$ must contain generators $(C',\theta,i)$ and
$(C',\theta',j)$ for at least one disjunctive clause $C'$.  If not,
then either 1) $L_{w}$ would be infinite as there would be some
recursion in which term size increases indefinitely; or 2) if there is
no such recursion that indefinitely increases the size of terms and no
disjunctive clauses, $L_w$ could not be arbitrarily large and this
would prove the property.  In fact, without disjunction the set of
clauses causing the recursion would produce an infinite model. With
disjunction, eventually a different head is chosen and the recursion
is stopped.

Consider then, for some set $D$ of disjunctive clauses, the set
$D_{expand}$ of generators must be used to derive (perhaps
indirectly) atoms whose size is strictly greater than the maximal size
of an atom in $I_n$, while another set of generators $D_{stop}$ must
be used to stop the production of larger atoms, since $WFM(w)$ is
finite.  However, if such a situation were the case, there must also
be a world $w_{inf}$ in which for ground clauses for $D$ whose
grounding substitution is over a certain size, only the set
$D_{expand}$ of generators is chosen and $D_{stop}$ is never chosen.
The well-founded model for $w_{inf}$ would then be infinite, against
the hypothesis that $T$ is \boundedtermsize{}.

The preceding argument has shown that since there is an overall bound
on the size of the largest atom in any world for $T$, $T$ has a finite
number of different models, each of which is finite.  As each model is
finite, there is a finite number of ground clauses that determine each
model by deriving the positive atoms in the model.  Each such clause
is associated with an atomic choice, and the set of these clauses
corresponds to a finite composite choice.  The set of these composite
choices corresponding to models in which the query $A$ is true
represent a finite set of finite explanations that is covering for
$A$.

Although the preceding paragraph assumed that $T$ did not contain
negation, the assumption was made only for simplicity, so that details
of strata need not be considered.  The argument for normal programs is
essentially the same, constituting an induction where the above
argument is made for each stratum.  Because
Definition~\ref{def:lrdyn-ops} specifies that an atom can be added
to an interpretation only once, there can only be a finite number of
strata in which some true atom is added, so that there will be only a
finite number of strata overall.  Since there are only a finite number
of strata, each of which has a finite number of applications of
$True^P_I(Tr)$, a finite bound $L$ can be constructed so that $T$
fulfills the definition of \boundedtermsize .
\end{proof}

%




%

\section{Proof of the Termination Theorem for Tabling (Section~6)} 

\noindent
{\em Theorem~3} \\ Let $P$ be fixed-order
dynamically stratified normal program, and $Q$ a \boundedtermsize{}
query to $P$.  Then there is an SLG evaluation of $Q$ to $P$ using
term-depth abstraction that finitely terminates.
\begin{proof}
SLG has been proven to terminate for other notions
of \boundedtermsize{} queries, so here we only sketch the termination
proof.

First, we note that \cite{SaSW99} guarantees that if $P$ is a
fixed-order stratified program, then there is an an SLG evaluation
$\cE$ of $P$ that does not require the use of the SLG {\sc Delaying},
{\sc Simplification} or {\sc Answer Completion} operations, and by
implication no forest of $\cE$ contains a conditional answer.  Such an
evaluation is termed {\em delay-minimal}.  Note that
Definition~\ref{def:bts} constrains only the bindings used in
$True^P_I$, and these constraints may not apply ground atoms that are
undefined in the $WFM(P)$.  As a result, condition answers, if they
are not simplified or removed by {\sc Simplification} or {\sc Answer
Completion} may not have a bounded term-size.  This situation is
avoided by delay-minimal evaluations.
Next, we assume that all negative selected literals are ground.  This
assumption causes no loss of generality as the evaluation will
flounder and so terminate finitely if a non-ground negative literal is
selected.  Given this context, the proof uses the forest-of-trees
model~\cite{Swif99b} of SLG~\cite{DBLP:journals/jacm/ChenW96}.
\begin{itemize} 
\item We
consider as an induction basis the case when $Q$ is in stratum 0 --
that is, when $Q$ can be derived without clauses that contain negative
literals, or is part of an unfounded set $\cS$ of atoms and clauses
for atoms in $\cS$ do not contain negative literals.  As argued in
Section~6, the use of term-depth abstraction ensures that an SLG
evaluation $\cE$ of a query $Q$ to a program with \boundedtermsize{}
has only a finite number of trees.  In addition, since SLG works on
the original clauses of a program $P$ and $P$ is finite, (although
$ground(P)$ may not be), there can be only a finite number of clauses
resolvable against the root of any tree via {\sc Program Clause
Resolution}, and so the root of each SLG tree can contain only a
finite number of children.  Finally, to show that each interior node
has a finite number of children, we consider that there can only be a
finite number of answers to any subgoal upon which $Q$ depends.  This
follows from the fact that $\cE$ is delay-minimal and so produces no
conditional answers, together with the the bound of Definition~4 that
ensures a program is \boundedtermsize .  As a result, there are only a
finite number of nodes that are produced through {\sc Answer Return}.
These observations together ensure that each tree in any SLG forest of
$\cE$ is finite.  Since each operation (including the SLG {\sc
Completion} operation, which does not add nodes to a forest) is
applicable only one time to a given node or set of nodes in an
evaluation (i.e. executing an SLG operation removes the conditions for
its applicability) the evaluation $\cE$ itself must be finite and
statement holds for the induction basis.

\item For the induction step, we assume the statement holds for
queries whose (fixed-order) dynamic strata is less than $N$ to show
that the statement will hold for a query $Q$ at stratum $N$ as well.
As indicated above, we use a delay-minimal SLG evaluation $\cE$ that
does not require {\sc Delaying}, {\sc Simplification} or {\sc Answer
Completion} operations.  For the induction case, the various SLG
operations that do not include negation will only produce a finite
number of trees and a finite number of nodes in each tree as described
in the induction basis.  However if there is a node $N$ in a forest
with a selected negative literal $\neg A$, the SLG operation {\sc
Negation Return} is applicable. In this case, a single child will be
produced for $N$ and no further operations will be applicable to $N$.
Thus any forest in $\cE$ will have a finite number of finite trees,
and since all operations can be applied once to each node, as before
$\cE$ will be finite, so that the statement holds by induction.
\end{itemize}
\end{proof}

\section{Proof of the Correctness Theorems for PITA (Section~8)} 

The next theorem addresses the correctness of the PITA evaluation.
As discussed in Section~8, the BDDs of the PITA
transformation are represented as ground terms, while BDD operations,
such as {\em and/3}, {\em or/3}\ etc. are infinite relations on such
terms.  The PITA transformation also uses the predicate {\em
get\_var\_n/4} whose definition in Section~7 is:

\[\begin{array}{ll}
get\_var\_n(R,S,Probs,Var)\leftarrow\\
\ \ \   (var(R,S,Var)\rightarrow true;\\
\ \ \    length(Probs,L), add\_var(L,Probs,Var), assert(var(R,S,Var))).
\end{array}\]

\noindent
This definition uses a non-logical update of the program, and so
without modifications, it is not suitable for our proofs below.
Alternately, we assume that $ground(T)$ is augmented with a
(potentially infinite) number of facts of the form
$var(R,[],Var)$ for each ground rule $R$ (note that no
variable instantiation is needed in the second argument of $var/3$ if
it is indexed on ground rule names).
Clearly, the augmentation of $T$ by such facts has the same meaning as
{\em get\_var\_n/4}, but is simply done by an a priori program extension
rather than during the computation as in the implementation.

\noindent
{\em Lemma} 1\\
Let $T$ be an LPAD and $Q$ a \boundedtermsize{} query to $T$.  Then
the query $PITA_H(Q)$ to $PITA(T)$ has \boundedtermsize{}.  
\begin{proof}
Although $T_Q$ (Definition~6)
has \boundedtermsize{}, we also need to ensure that $PITA(T_Q)$
has \boundedtermsize{}, given the addition of the BDD relations {\em
and/3}, {\em or/3}, etc. along with the {\em var/3} relations
mentioned above. 

Both {\em var/3} and the BDD relations are functional on their input
arguments (i.e. the first two arguments of {\em var/3}, {\em and/3},
{\em or/3}. etc.  (cf. Section~7).  Therefore, for the
body of a clause $C$ that was true in an application of $True_I^{T_Q}$
there are exactly $n$ bodies that are true in an application of
$True_I^{PITA(T_Q)}$, where $n$ is the number of heads of $C$. Thus
the size of every ground substitutions in every iteration of
$True_I^{PITA(T_Q)}$ is bounded as well.

Note that since $PITA(T)$ and $PITA_H(Q)$ are both syntactic
transformations, the theorem applies even if the LPAD isn't sound.
\end{proof}

\comment{

By the preceding discussion, the extra literals introduced into the
bodies of rules by the $PITA$ transformation are all defined by facts.
Accordingly, for e.g. $and/3$, $and/3$ atoms that are in the $and/3$
relation are true at all points in the well-founded computation after
$True_{\emptyset}(\emptyset)$, and all $and/3$ atoms that are not in
the $and/3$ relation are false at all points after
$False_{\emptyset}(\emptyset)$.  Accordingly, derivation of an atom
$PITA_H(a)$ in $PITA(T)$ will take a finite number of steps
(i.e. applications of the $True$, $False$ and $WFM$ operators) if the
derivation of $A$ takes a finite number of steps in each possible
world of $T$, so that $PITA(T)$ is \boundedtermsize{}.
}

\noindent
{\em Theorem}~4\\
Let $T$ be a fixed-order dynamically stratified LPAD and $Q$ a ground
\boundedtermsize{} atomic query. Then there is an SLG evaluation $\cE$ of
$PITA_H(Q)$ against $PITA(T_Q)$, such that answer subsumption is
declared on $PITA_H(Q)$ using BDD-disjunction where $\cE$ finitely
terminates with an answer $Ans$ for $PITA_H(Q)$ and $BDD(Ans)$
represents a covering set of explanations for $Q$.
\begin{proof} (Sketch)
The proof uses the forest-of-trees model~\cite{Swif99b} of
SLG~\cite{DBLP:journals/jacm/ChenW96}.

Because $T$ is fixed-order dynamically stratified, queries to $T$ can
be evaluated using SLG without the {\sc delaying}, {\sc
simplification} or {\sc answer completion} operations.  Instead,
as \cite{SaSW99} shows, only the SLG operations {\sc new subgoal},
{\sc program clause resolution}, {\sc answer return} and {\sc negative
return} are needed.  Since $T$ is fixed-order dynamically stratified,
it is immediate from inspecting the transformations of
Section~\ref{algorithm} together with the fact that the BDD relations
are functional that $PITA(T)$ is also fixed-order dynamically
stratified as is $PITA(T)_Q$.

However, Theorem~3 must be extended to evaluations that include answer
subsumption, which we capture with a new operation {\sc Answer Join}
to perform answer subsumption over an upper semi-lattice $L$.  Without
loss of generality we assume that a given predicate of arity $m > 0$
has had answer subsumption declared on its $m^{th}$ argument and we
term the first $m-1$ arguments {\em non-subsuming arguments}.  We
recall that a node $N$ is an answer in an SLG tree $T$ if $N$ has no
unresolved goals and is a leaf in $T$.  Accordingly, creating a child
of $N$ with a special marker $fail$ is a method to effectively delete
an answer (cf. \cite{Swif99b}).

\begin{itemize}
\item {\sc Answer Join}: Let an SLG forest $\cF_n$
contain an answer node
\[
N = Ans \leftarrow 
\]
where the predicate for $Ans$ has been declared to use answer
subsumption over a lattice $L$ for which the join operation is
decidable, and let the arity of $Ans$ be $m>0$.  Further, let $\cA$ be
the set of all answers in $\cF_n$ that are in the same tree, $T_N$, as
$N$ and for which the non-subsuming arguments are the same as $Ans$.
Let $Join$ be the $L$-join of all the final arguments of all answers
in $\cA$.
\begin{itemize}
\item If ($Ans \leftarrow) \{arg(m,Ans)/Join\}$ is not an answer
in $T_N$, add it as a child of $N$, and add the child $fail$ to all
other answers in $\cA$.  
\item Otherwise, if
($Ans \leftarrow) \{arg(m,Ans)/Join\}$ is answer in $T_N$, create a
child $fail$ for $N$.
\end{itemize}
\end{itemize}
%
For the proof, the first item to note is that since $T_Q$
is \boundedtermsize , any clauses on which $Q$ depends that give rise
to true atoms in the well-founded model of any world of $T$ must be be
range-restricted -- otherwise since $T$ has function symbols, $T_Q$
would have an infinite model and not be \boundedtermsize.  Given this,
it is then straightforward to show that $PITA(T)_Q$ is also
range-restricted and that any answer $A$ of $PITA_H(Q)$ will be ground
(cf.  \cite{DBLP:journals/etai/Muggleton00}).  Accordingly, the
operation {\sc Answer Join} will be applicable to any subgoal with a
non-empty set of answers.

We extend Theorem~3 and Lemma~1 to show that since $PITA(T)_Q$ has
the \boundedtermsize{} property, a SLG evaluation of a query
$PITA_H(Q)$ to $PITA(T)_Q$ will terminate.  Because the join operation
for $L$ is decidable, computation of the join will not affect
termination properties.  Let $T_N$ be a tree whose root subgoal is a
predicate that uses answer subsumption.  Then each time a new answer
node $N$ is added to $T_N$ there will be one new {\sc Answer Join}
operation that becomes applicable for $N$.  Let $\cA$ be a set of
answers in $T_N$ as in the definition of {\sc Answer Join}.  Then
applying the {\sc Answer Join} operation will either 1) create a child
of $N$ that is a new answer and ``delete'' $|\cA|$ answers by creating
children for them of the form $fail$; or 2) ``delete'' the answer $N$
by creating a child $fail$ of $N$.  Clearly any answer can be deleted
at most once, and each application of the {\sc Answer Join} operation
will delete at least one answer in $T_N$.  Accordingly, if $T_N$
contains $Num$ answers, there can be at most $Num$ applications of
{\sc Answer Join} for answers in $T_N$. Using these considerations it
is straightforward to show that termination of \boundedtermsize{}
programs holds for SLG evaluations extended with answer
subsumption~\footnote{As an aside, note that due to the fact that {\sc
Answer Join} deletes all answers in $\cA$ except the join, it can be
shown by induction that immediately after an {\sc Answer Join}
operation is applied to $Ans$ in a tree $T_N$, there will be only one
``non-deleted'' answer in $T_N$ with the same non-subsuming bindings
as $Ans$.  Accordingly, if the cost of computing the join is constant,
the total cost of $Num$ {\sc Answer Join} operations will be $Num$.
Based on this observation, the implementation of PITA can be thought
of as applying an {\sc Answer Join} operation immediately after a new
answer is derived in order to avoid returning answers that are not
optimal given the current state of the computation.}.

Thus, the \boundedtermsize{} property of $PITA(T)_Q$ together with
Theorem~2 imply that there will be a finite set of finite explanations
for $PITA_H(Q)$, and the preceding argument shows that SLG extended
with {\sc Answer Join} will terminate on the query $PITA_H(Q)$.
It remains to show that an answer $Ans$ for $PITA_H(Q)$ in the final
state of $\cE$ is such that $BDD(Ans)$ represents a covering set of
explanations for $Q$.
That $BDD(Ans)$ contains a covering set of explanations can be shown
by induction on the number of BDD operations.  
For the induction basis it is easy to see that the operations {\em
zero/1} and {\em one/1} are covering for false and true atoms respectively.
\begin{itemize}
\item Consider an ``and'' operation in the body of a clause. For the
inductive assumption, $BB_{i-1}$ and $B_{i}$ both represent finite
  set of explanations covering for $L_1,\ldots,L_{i-1}$ and
 $L_i$ respectively.  Let $F_{i-1}$, $F'_i$, and $F_{i}$ be the
 formulas expressed by $BB_{i-1}$, $B_i$, and $BB_{i}$ respectively.
 These formulas can be represented in disjunctive normal form, in
 which every disjunct represents an explanation. $F_{i}$ is obtained
 by multiplying $F_{i-1}$ and $F'_{i}$ so, by algebraic manipulation,
 we can obtain a formula in disjunctive normal form in which every
 disjunct is the conjunction of two disjuncts, one from $F_{i-1}$ and
 one from $F'_i$. Every disjunct is thus an explanation for the body
 prefix up to and including $L_i$.
%
Moreover, every disjunct for $F_{i}$ is obtained by conjoining a
disjunct for $F_{i-1}$ with a disjunct for $F'_i$.
\item In the case of a ``not'' operation in the body of a clause, let
$L_i$ be the negative literal $\neg D$.  Then for $BN_{i}$  the BDD
produced by $D$, $not(BN_i,B_i)$ simply negates this BDD to produce a
covering set of explanations for $\neg D$.
\item In the case of an ``or'' operation between two answers, the resulting
BDD will represents the union of the set of explanations represented
by the BDDs that are joined.  
\end{itemize}
Since the property holds both for the induction basis and the
induction step, the set of explanations represented by $BDD(Ans)$ is
covering for the query.
\end{proof}

\comment{
there is some
ordinal $n$ such that $PITA(A)\theta$ is true in $\cE_n$ for some
substitution $\eta$, iff
\[
  BDD(PITA(A)\eta) = \bigvee \{BDD(PITA(A)\theta) |
  PITA(A)\theta\mbox{ is true in } WFM(PITA(ground(T)))\}
\]
 }

\comment{
\paragraph{Stratification induction}
Suppose $A$ is a ground atom built on a predicate of $Q_0$, and the
theorem holds for all atoms lower in the stratification ordering and
for all structured formulas build from atoms lower in the
stratification ordering. Suppose $R(g)$ is the set of all ground
instances of rules in $gr(V_0)$ with $A$ in the head.  Let
$C_r=h_0:\alpha_0\vee\ldots h_n:\alpha_n\leftarrow b_1,\ldots,b_n$ be
an element of $R(g)$ in which $h_i=g$. Now consider the clause
$t(C_r,i)$: the negative literals are relative to atoms that are lower
in the stratification ordering with respect to $A$, so they can be
completely evaluated before evaluating $A$. The positive literals have
a level that is lower or equal than $A$, however, if there is a
positive loop SLG would detect it and fail the literal. Therefore, if
there is no failure, SLG is able to find the BDDs for all the
literals. By the inductive hypothesis, the conjunction of these BDDs
represents a finite set of finite explanations that are covering for
the body. The conjunction is then conjoined with an equality that
ensures that head $i$ is selected for ground clause $r$, thus yielding
a finite set of finite explanations that are covering for $A$ if
$R(g)$ contains only $C_r$. If $R(G)$ contains other clauses, the BDDs
obtained for each clause are then disjoined to get a finite set of
explanations that are covering for $A$ }

\comment{
\paragraph{Component induction}
Suppose $A$ is a ground atom  built on a predicate of $Q_i$, and the
theorem holds for the atoms lower in the stratification ordering, for the predicates belonging to previous components and for the structured
formulas build from atoms lower in the stratification ordering and belonging to previous components.
Suppose $R(g)$ is
the set of all ground instances of rules in $gr(V_0)$ with $A$      in the head. 
Let $C_r=h_0:\alpha_0\vee\ldots h_n:\alpha_n\leftarrow b_1,\ldots,b_n$
be an element of $R(g)$ in which $h_i=g$. Suppose the literals of
previous components appear in the body before those of the current
component. By the inductive hypothesis, the BDDs associated to these
literals represent finite set of explanations that are covering for
the literals. Therefore, by means of a reasoning similar to the one
for stratification induction, the theorem holds.
}

\comment{ save for next theorem

If there are more matching facts, answer subsumption computes the
disjunction of all the involved BDDs, yielding a finite set of finite
explanations that are covering.
}

\end{document}